\documentclass{article}

\usepackage{iclr2027_conference,times}

\usepackage[utf8]{inputenc}
\usepackage[T1]{fontenc}
\usepackage{graphicx}
\usepackage{hyperref}
\usepackage{url}
\usepackage{booktabs}
\usepackage{amsfonts}
\usepackage{amsmath}
\usepackage{amssymb}
\usepackage{nicefrac}
\usepackage{microtype}
\usepackage{xcolor}
\usepackage{algorithm}
\usepackage{algorithmic}
\usepackage{subcaption}
\usepackage{multirow}
\usepackage{enumitem}
\usepackage{pifont}
\usepackage{amsthm}
\usepackage{wrapfig}
\usepackage{placeins}

\newtheorem{theorem}{Theorem}
\newtheorem{proposition}[theorem]{Proposition}

\newcommand{\ours}{\textsc{PharmAgent}}

\title{PharmAgent: Constraint-Aware Search with\\Frozen Language Models for Molecular Optimization}

\newcommand{\pharmagentauthorblock}{%
  {\fontsize{11.5pt}{15pt}\selectfont
    \mbox{Nihui Shao\textsuperscript{1,*}},
    \mbox{Guanxing Chen\textsuperscript{1,*,\ensuremath{\dagger}}},
    \mbox{Jilong Shi\textsuperscript{1}},
    \mbox{Zhengyang Bai\textsuperscript{1}},
    \mbox{Haohuai He\textsuperscript{2}},
    \mbox{Zhenchao Tang\textsuperscript{3}},
    \mbox{Qiujie Lv\textsuperscript{4}},
    \mbox{Yu-An Huang\textsuperscript{5,\ensuremath{\dagger}}}, and
    \mbox{Zhi-An Huang\textsuperscript{1,\ensuremath{\dagger}}}\par}
  \vspace{13pt}
  {\fontsize{10.5pt}{13pt}\selectfont
    \textsuperscript{1}\,City University of Hong Kong (Dongguan)\\
    \textsuperscript{2}\,The Hong Kong Polytechnic University\\
    \textsuperscript{3}\,Tencent AI Lab\\
    \textsuperscript{4}\,Zhengzhou University\\
    \textsuperscript{5}\,Northwestern Polytechnical University\par}
  \vspace{9pt}
  {\fontsize{10pt}{12pt}\selectfont
    \textsuperscript{*}\,Equal contribution.\qquad
    \textsuperscript{\ensuremath{\dagger}}\,Corresponding authors.\par}%
}
\author{\pharmagentauthorblock}

\hypersetup{pdfauthor={Nihui Shao, Guanxing Chen, Jilong Shi, Zhengyang Bai, Haohuai He, Zhenchao Tang, Qiujie Lv, Yu-An Huang, Zhi-An Huang}}

\iclrfinalcopy
\usepackage[letterpaper,left=1.25in,right=1.25in,top=1in,bottom=1.05in,
  includeheadfoot,headheight=12pt,headsep=16pt,footskip=24pt]{geometry}
\usepackage{etoolbox}
\renewcommand{\ours}{\mbox{\textsc{PharmAgent}}}
\AtEndEnvironment{abstract}{\clearpage}

\hypersetup{
  hidelinks,
  pdftitle={PharmAgent: Constraint-Aware Search with Frozen Language Models for Molecular Optimization},
  pdfsubject={Molecular optimization with frozen language models},
  pdfkeywords={molecular optimization, constrained optimization, large language models}
}
\date{}

\makeatletter
\renewcommand{\normalsize}{\@setfontsize\normalsize{10pt}{11.5pt}}
\renewcommand{\small}{\@setfontsize\small{10pt}{11.5pt}}
\renewcommand{\footnotesize}{\@setfontsize\footnotesize{10pt}{11.5pt}}
\normalsize
\renewcommand{\@maketitle}{%
  \vbox{\hsize\textwidth\parindent0pt\centering
    \hrule height 2.2pt
    \vskip 10pt
    {\normalfont\bfseries\fontsize{18pt}{22pt}\selectfont\@title\par}
    \vskip 10pt
    \hrule height 0.6pt
    \vskip 17pt
    {\normalsize\normalfont\@author\par}
    \vskip 10pt
  }%
  \lhead{Preprint}%
  \thispagestyle{plain}%
}
\makeatother


\begin{document}
\raggedbottom

\maketitle

%=============================================================================
\begin{abstract}
Molecular optimization must improve target activity and satisfy developability constraints within limited evaluation budgets.
Classical methods require tailored rules or training to incorporate chemical instructions and property feedback.
Frozen language models can condition edits on this information, but need explicit constraint control and relevant experience.
We therefore present \ours{}, a constraint-aware molecular search method driven by adaptive external state.
Its Lagrangian controller translates violations in accepted states into accumulated constraint pressure, keeping this history separate from current property measurements.
Structure-indexed replay complements this feedback with relevant evaluated transitions that guide subsequent proposals.
As a curriculum progressively activates constraints, candidates and the incumbent are compared under the same current objective, and the accepted state determines the next multiplier update.
We derive an exact identity that characterizes how accepted-state violations accumulate in the controller's multipliers.
Across five tasks with five independent runs, \ours{} achieves a summed area under the target-score curves (AUC) of 3.9208 in target-only search, improving over MOLLEO by 37.3\%. With online constraints, it achieves a property-adjusted AUC of 0.7076, improving over the strongest online baseline, ExLLM, by 53.8\%.
These results rank first among all evaluated methods in both target-only and constraint-aware search.
The online comparison covers all five baseline frameworks. The full system leads every ablation variant in target quality, property-adjusted performance, and Pareto hypervolume. All five molecular cases reach feasible final states, documenting target gains and trade-offs.
These results establish \ours{} as a powerful framework for budget-limited molecular optimization, achieving leading aggregate performance through adaptive external search state and frozen language models.
\end{abstract}

\section{Introduction}
\label{sec:intro}
Molecular optimization searches for structures with improved target activity and suitable physicochemical, synthesizability, and absorption, distribution, metabolism, excretion, and toxicity (ADMET) properties~\citep{bickerton2012qed,gao2020synthesizability}.
Under limited budgets, each evaluation should guide productive structural edits~\citep{gao2022sample}.
Graph-GA's stochastic mutation and crossover can spend evaluations on unproductive edits~\citep{jensen2019graph,wang2025molleo}.
REINVENT4 adapts its policy through reward-driven parameter updates~\citep{blaschke2024reinvent4}, while latent-space Bayesian optimization depends on learned representations and valid decoding~\citep{gomez2018automatic}.
These methods require translating chemical instructions and heterogeneous feedback into search rules or training signals~\citep{park2025exllm}.
Large language models (LLMs) can instead condition chemically informed edits directly on task instructions and evaluated experience~\citep{wang2025molleo,park2025exllm,kim2026seismo}.
Freezing their parameters enables adaptation through context without retraining~\citep{yang2024opro,bran2024chemcrow}.

However, proposal quality alone leaves two feedback gaps (Figure~\ref{fig:overview}(a)).
First, target gains can introduce property violations~\citep{gao2020synthesizability}.
Persistent violations call for search priorities that adapt to violation history, beyond fixed penalty weights or post-hoc filtering.
Second, multi-round interaction creates experience, but useful guidance requires edits relevant to the current molecular structure~\citep{shinn2023reflexion,zhao2025cer}.
These gaps motivate dynamic constraint control and structure-aware experience retrieval.

To address both gaps, we propose \ours{}, a constraint-aware molecular search method that adapts external state around frozen language models.
Generator and Reviewer execute the proposal, evaluation, acceptance, and feedback loop with scientific tools and medicinal-chemistry retrieval-augmented generation (MedChem-RAG) (Figure~\ref{fig:overview}(b)).
Three mechanisms organize external state (Figure~\ref{fig:overview}(c)).
A Lagrangian controller accumulates accepted-state violations as historical pressure, separate from current measurements~\citep{bertsekas1999nonlinear}.
Chemical-aware contextual experience replay (Chemical-CER) retrieves structurally related evaluated edits, and a curriculum introduces constraints after target exploration.
Candidates and the incumbent share the current objective, while stored attempts remain distinct from accepted states.
Table~\ref{tab:intro_comparison} compares the evaluated configurations.
\begin{table}[htbp]
\centering\small
\caption{Mechanisms in evaluated configurations (\ding{51} enabled, \ding{55} not enabled). Baseline rows show original configurations. Online adaptations use fixed-penalty (FP) rewards or feasibility-first (FF) rewards, selection, or conversational guidance. Retrieval and cross-task reuse concern optimization attempts. Appendix~\ref{app:online} specifies protocols.}
\label{tab:intro_comparison}
\setlength{\tabcolsep}{4pt}
\begin{tabular}{lcccc}
\toprule
Method & \shortstack{Online\\constraints} & \shortstack{Adaptive\\weights} & \shortstack{Structure-based\\retrieval} & \shortstack{Cross-task\\reuse} \\
\midrule
Graph-GA / REINVENT4 & \ding{55} & \ding{55} & \ding{55} & \ding{55} \\
MOLLEO / ExLLM / SEISMO & \ding{55} & \ding{55} & \ding{55} & \ding{55} \\
Online adaptations & \ding{51} & \ding{55} & \ding{55} & \ding{55} \\
\ours{} & \ding{51} & \ding{51} & \ding{51} & \ding{51} \\
\bottomrule
\end{tabular}
\end{table}

We evaluate target-only search (E1) on practical molecular optimization (PMO)~\citep{gao2022sample} and search with online constraints (E2) on ADMET-PMO, followed by ablations and molecular cases. Our contributions are:
\begin{itemize}[leftmargin=*,itemsep=2pt]
\item A unified molecular search framework that turns property feedback, structurally relevant transitions, and constraint timing into adaptive external decision state for frozen LLMs.
\item An analytical characterization linking multiplier growth to cumulative accepted-state violations, explaining how the controller retains historical constraint pressure.
\item The highest aggregate performance across five tasks and five runs, with a 37.3\% gain in target-score area under the curve (AUC) in E1 and a 53.8\% gain in ADMET-adjusted AUC (AA-AUC) over the strongest online baseline in E2. The full system leads every ablation, and all five cases reach feasibility.
\end{itemize}
\begin{figure}[t]
\centering\includegraphics[width=\linewidth]{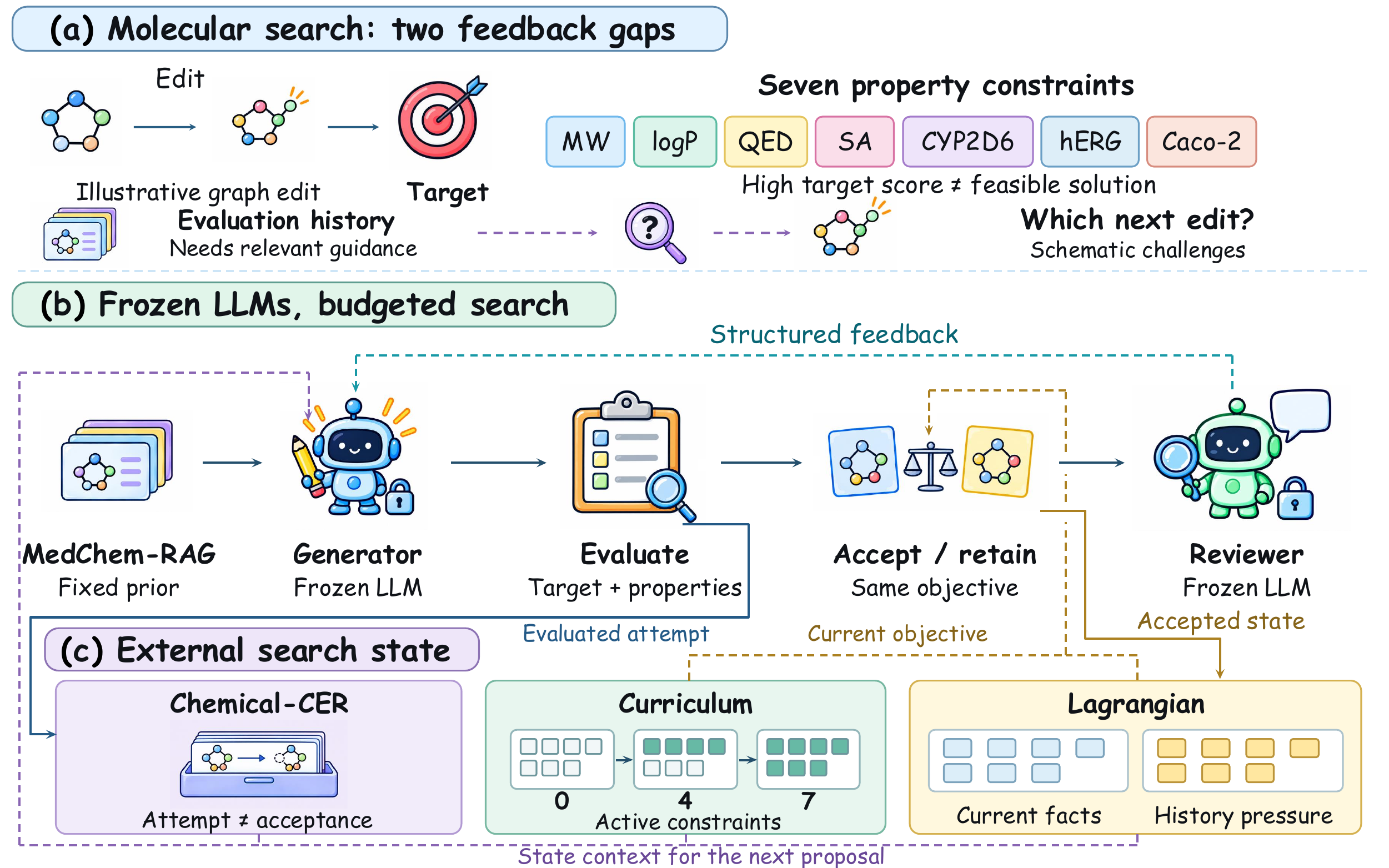}
\caption{Motivation and the \ours{} feedback loop. (a) Two feedback gaps in molecular search. High target scores need not imply feasibility, and past evaluations require relevant retrieval. (b) Frozen Generator--Reviewer workflow for proposal, evaluation, acceptance, and feedback. Candidates and the incumbent share the current objective. (c) External search state comprising Chemical-CER, curriculum, and Lagrangian control. Evaluated attempts enter replay, while accepted states update multipliers. Current measurements remain separate from historical pressure. Property labels denote molecular weight (MW), the logarithm of the octanol--water partition coefficient (logP), quantitative estimate of drug-likeness (QED), synthetic accessibility (SA), cytochrome P450 2D6 (CYP2D6), the human ether-\`a-go-go-related gene potassium channel (hERG), and Caco-2 cell permeability. Icons are schematic. Solid routes show flow or updates, whereas dashed routes show conditioning.}
\label{fig:overview}\label{fig:motivation}\label{fig:architecture}
\end{figure}

\section{Related Work}
\label{sec:related}
\paragraph{Molecular generation with property feedback.}
Multi-property optimization must coordinate conflicting chemical objectives~\citep{bickerton2012qed,gao2020synthesizability}.
InversionGNN uses learned structure--property knowledge and Pareto search~\citep{niu2025inversiongnn}. MolGen incorporates chemical feedback into molecular language modeling~\citep{fang2024molgen}. Uncertainty-aware reinforcement learning (RL) guides diffusion models toward multiple molecular properties~\citep{chen2025uncertainty}.
These approaches connect generative mechanisms to scientific evaluations.
\ours{} adapts measured search state around a frozen model, with explicit online inequalities and a finite target-oracle budget.
\paragraph{Language-model search and memory.}
MOLLEO~\citep{wang2025molleo}, ExLLM~\citep{park2025exllm}, and SEISMO~\citep{kim2026seismo} organize search around populations, experience summaries, and interaction trajectories.
Contextual experience replay~\citep{zhao2025cer} reuses agent interactions without parameter updates. ReAct~\citep{yao2023react} and ChemCrow~\citep{bran2024chemcrow} ground proposals in tool observations.
\ours{} integrates numerical constraint history and structurally relevant transitions directly into proposal-and-acceptance decisions.
\paragraph{Constrained optimization and evaluation.}
Penalty and multiplier methods are classical~\citep{bertsekas1999nonlinear,boyd2004convex}.
Our contribution connects these methods to frozen-LLM molecular search through accepted-state updates, shared-objective comparisons, and separate measurement/history feedback.
Graph-GA~\citep{jensen2019graph} and REINVENT4~\citep{blaschke2024reinvent4} supply non-LLM controls. PMO measures target progress against oracle calls~\citep{gao2022sample}.
ADMET-PMO adds shared RDKit and ADMET-AI evaluations~\citep{rdkit,swanson2024admetai}, supporting both online constraints and post-hoc annotation.

\section{\ours{}: Constraint-Aware Molecular Search}
\label{sec:method}
\subsection{Problem Setup}
\label{sec:problem_setup}
Let $\mathcal X$ be the space of valid molecular structures, $f$ a black-box target oracle, and $g_i$ property evaluators with thresholds $t_i$.
The budget $B$ counts unique target evaluations, including initialization. Neither the evaluators nor the frozen LLM supply gradients.
We seek a high-scoring molecule satisfying $K$ property constraints:
\begin{equation}
\max_{x\in\mathcal X} f(x)\quad\text{s.t.}\quad \tilde g_i(x)\leq\tilde t_i,\quad i=1,\ldots,K.
\label{eq:constraints}
\end{equation}
For an upper bound set $a_i=1$, and for a lower bound set $a_i=-1$, with $\tilde g_i=a_i g_i$ and $\tilde t_i=a_i t_i$.
Normalized violation is $v_i(x)=\max(0,(\tilde g_i(x)-\tilde t_i)/r_i)$ for fixed scales $r_i>0$.
Validity defines $\mathcal X$. Chemically valid structures remain eligible despite property violations.

\subsection{Search Framework}
\label{sec:overview}
The round-$t$ state comprises incumbent $y^{(t)}$, active constraints $\mathcal C_t$, multipliers $\boldsymbol\lambda^{(t)}$, penalty $\rho^{(t)}$, archive $\mathcal D_t$, and Reviewer feedback $R_t$:
\begin{equation}
\begin{aligned}
S_t&=(y^{(t)},\mathcal C_t,\boldsymbol\lambda^{(t)},\rho^{(t)},\mathcal D_t,R_t),\\
\mathcal P_t&\sim q_\theta\bigl(\cdot\mid\operatorname{Encode}(S_t;\mathcal I)\bigr),\qquad\theta\ \text{fixed}.
\end{aligned}
\label{eq:state_interface}
\end{equation}
$\mathcal P_t$ is the proposal batch. $\mathcal I$ supplies molecular representations, property definitions, retrieval features, and fixed chemical context. Measurements adapt $S_t$ while model parameters stay fixed.

The Generator proposes up to five molecular strings in the simplified molecular-input line-entry system (SMILES)~\citep{weininger1988smiles} per request. RDKit~\citep{rdkit} checks validity and novelty after canonicalization. RDKit and ADMET-AI~v1.4.0 supply properties~\citep{swanson2024admetai}.
MedChem-RAG retrieves ChEMBL~\citep{zdrazil2024chembl} matched molecular pairs (MMPs) through fingerprint-based Facebook AI similarity search (FAISS)~\citep{griffen2011mmp,dossetter2013mmp,johnson2019faiss,rogers2010ecfp}.
Only Generator and Reviewer invoke the frozen LLM. The Reviewer returns structured priorities and modification categories. Replay supplies canonical structure pairs and numeric outcomes, excluding historical model prose (Appendix~\ref{app:prompts}).

\subsection{Lagrangian Feedback and Acceptance}
\label{sec:lagrangian}
The controller turns constraint violations into both a numerical acceptance criterion and feedback for subsequent proposals. Classical multiplier and quadratic-penalty terms~\citep{bertsekas1976penalty} define the current objective:
\begin{equation}
\mathcal L_{\mathcal C_t}(x,\boldsymbol\lambda,\rho)
=f(x)-\sum_{i\in\mathcal C_t}\lambda_i v_i(x)
-\frac{\rho}{2}\sum_{i\in\mathcal C_t}v_i(x)^2.
\label{eq:lagrangian}
\end{equation}
An empty active set recovers $f$. For incoming incumbent $y^{(t)}$ and current-objective round-best candidate $c^{(t)}$, acceptance is
\begin{equation}
\bar x^{(t)}=\begin{cases}
c^{(t)},&\mathcal L_{\mathcal C_t}(c^{(t)},\boldsymbol\lambda^{(t)},\rho^{(t)})>
\mathcal L_{\mathcal C_t}(y^{(t)},\boldsymbol\lambda^{(t)},\rho^{(t)}),\\
y^{(t)},&\text{otherwise}.
\end{cases}
\label{eq:accept}
\end{equation}
Both sides use the same active constraints, multipliers, and penalty coefficient. Cached properties rescore the incumbent without another oracle call. This comparison isolates the candidate's improvement under the current objective from changes caused by the controller itself. A score stored before a constraint or multiplier update would conflate these two effects.

After a completed scored round, let $V_{\mathcal C_t}^{(t)}=\sum_{i\in\mathcal C_t}v_i(\bar x^{(t)})$ and update
\begin{equation}
\begin{aligned}
\lambda_i^{(t+1)}&=\lambda_i^{(t)}+\rho^{(t)}v_i(\bar x^{(t)}),\quad i\in\mathcal C_t,\\
\rho^{(t+1)}&=\begin{cases}
\min(\gamma\rho^{(t)},\rho_{\max}),&V_{\mathcal C_t}^{(t)}>0,\\
\rho^{(t)},&V_{\mathcal C_t}^{(t)}=0.
\end{cases}
\end{aligned}
\label{eq:dual_update}
\end{equation}
Multipliers start at zero, making nonnegative projection inactive. Inactive multipliers remain unchanged.
When a proposal is rejected, the retained incumbent drives the update. Thus the accumulated pressure reflects the search state that persists into the next round, while rejected attempts remain available to replay. Empty rounds skip updates.
Active-set expansion resets $\rho$ to $\rho_0>0$ before rescoring, retaining multipliers.

The current-status block reports each active property's value, threshold, direction, signed margin, and violation. The history block reports $\lambda_i$, accumulated from accepted-state violations. These blocks answer different questions. Measurements determine which constraints the current molecule satisfies, while multiplier history identifies constraints that have repeatedly resisted the search.
A feasible molecule can retain a large multiplier because earlier accepted states violated that constraint. This pressure increases the cost of reintroducing the violation in future candidates. Conversely, zero pressure alone does not establish feasibility. The Reviewer receives each candidate's own measurements together with this history, allowing proposal guidance to use both present satisfaction and persistent constraint priorities.

\begin{algorithm}[t]
\caption{\ours{}: external-state search with a frozen proposal model}
\label{alg:pharmagent}
\begin{algorithmic}[1]
\REQUIRE Oracles $f,\{g_i\}$, budget $B$, round cap $T_{\max}$, run-local archive $\mathcal D$
\STATE Score seed pool within $B$. initialize incumbent $x$, seen set $\mathcal S$, spent calls $b$, $\boldsymbol\lambda=0$, $\rho=\rho_0$
\FOR{$t=0,\ldots,T_{\max}-1$ while $b<B$}
\STATE Set $\mathcal C_t$. on expansion reset $\rho$, retain $\boldsymbol\lambda$. rescore $x$
\STATE Retrieve eligible structure-indexed transitions. assemble facts, history, RAG, and feedback
\STATE Generate at most five candidates. verify validity and global novelty
\STATE If none remain, allow one repair request. if still empty, retain context and continue
\STATE Evaluate novel candidates within remaining $B$. update $b,\mathcal S$. select round-best $c$
\STATE Accept $\bar x$ by Eq.~\ref{eq:accept}. invoke Reviewer on scored outcomes and accepted state
\STATE Update multipliers from $\bar x$ by Eq.~\ref{eq:dual_update}. record evaluated transition $(x,c)$
\STATE Update replay and bounded recovery context. set $x\leftarrow\bar x$
\ENDFOR
\RETURN Evaluations, accepted trajectory, dual history, and archive $\mathcal D$
\end{algorithmic}
\end{algorithm}

\subsection{Structure-Indexed Experience Replay}
\label{sec:cer}
Chemical-CER retrieves structurally relevant attempts using source Morgan fingerprints (radius 2, 2048 bits)~\citep{rogers2010ecfp}.
A completed round records incoming incumbent $y^{(t)}$, round-best candidate $c^{(t)}$, target/ADMET changes, and the scalar change under the collection-time objective.
Replay stores the evaluated attempt $(y^{(t)},c^{(t)})$, and the acceptance rule tracks $(y^{(t)},\bar x^{(t)})$. The stored scalar delta retains the collection-time objective.

Retrieve up to $k$ eligible transitions by source-structure proximity:
\begin{equation}
\mathcal E_k^{(t)}=\operatorname{TopK}_{e\in\mathcal D_t^{\mathrm{eligible}}}
\left(-\|\mathbf{fp}_{y^{(t)}}-\mathbf{fp}_{\mathrm{src}}(e)\|_2\right).
\label{eq:retrieval}
\end{equation}
Replay provides target-only information in Phases 1--2 and combined-objective information for all seven constraints in Phase 3, with phase-specific retrieval limits (Appendix~\ref{app:cer}).
Retrieved canonical source/destination pairs and finite numeric deltas condition the Generator alongside fixed RAG context. Each novel proposal is evaluated before acceptance.
The run-local archive persists across tasks.
The CER ablation evaluates the contribution of this replay component.

\subsection{Constraint Scheduling}
\label{sec:admet_scoring}
A fixed target-first curriculum~\citep{xu2025curriculum} activates no ADMET constraints in rounds $[0,20)$, MW/logP/QED/SA in $[20,60)$, and all seven in $[60,\infty)$.
Round indices determine the constraint-activation boundaries.
Expansion adds measurements to evaluation and context, resetting quadratic pressure while retaining multipliers.
Stagnation triggers bounded context recovery with the same activation schedule and acceptance rule (Appendix~\ref{app:compute}).
A continuous sigmoid-based ADMET score supports replay annotation and Pareto evaluation (Eq.~\ref{eq:admet_score}). Hard feasibility requires all seven inequalities, and acceptance uses Eq.~\ref{eq:lagrangian}.

\subsection{Multiplier Dynamics}
\label{sec:theory}
Index only completed updates after full activation, with $V^{(t)}=\sum_i v_i(\bar x^{(t)})$. $\boldsymbol\lambda^{(0)}$ includes prior-phase history.
For any finite $T$, telescoping Eq.~\ref{eq:dual_update} gives the exact identity
\begin{equation}
\sum_{t=0}^{T-1}\rho^{(t)}V^{(t)}=\sum_{i=1}^{K}(\lambda_i^{(T)}-\lambda_i^{(0)}).
\label{eq:violation_accounting}
\end{equation}
Each multiplier accumulates the $\rho$-weighted violations of accepted or retained states. When constraint $i$ is satisfied, its increment is zero while earlier pressure persists.
At fixed measurements and $\rho$, increasing $\lambda_i$ by $\Delta\lambda_i$ lowers a candidate's score by $\Delta\lambda_i v_i(x)$. The accumulated history therefore raises the cost of reintroducing a violation.
Appendix~\ref{app:proof} derives the identity and states a conditional vanishing-violation result under bounded multipliers.

\section{Experiments}
\label{sec:experiments}
\subsection{Tasks, Metrics, and Comparison Protocol}
\label{sec:benchmark}
We evaluate glycogen synthase kinase 3 beta (GSK3$\beta$), c-Jun N-terminal kinase 3 (JNK3), dopamine receptor D2 (DRD2), Sitagliptin multi-property optimization (MPO), and Albuterol similarity, spanning predicted target activity, a multi-property objective, and structural similarity. These tasks test different routes to target improvement under the same seven-property screen.
ADMET-PMO adds the seven shared property constraints illustrated in Figure~\ref{fig:overview}(a), with thresholds in Table~\ref{tab:constraints}.
Target AUC Top-10 measures target progress~\citep{gao2022sample}. AA-AUC averages the running best target score among molecules satisfying all seven constraints. Pareto hypervolume (PHV) measures dominated area in target--continuous-ADMET-score space~\citep{zitzler2003hypervolume}.
PHV uses both feasible and infeasible evaluated molecules.
Definitions appear in Appendix~\ref{app:scales}.

All main and ablation configurations use five independent runs indexed 0--4.
All methods are allowed up to 1,000 unique target evaluations per task/run, including initialization. AUC uses a common 1,000-call horizon.
SEISMO follows its original short-run protocol~\citep{kim2026seismo}. After termination, its final Top-10 and best-feasible scores are carried forward over the remaining horizon.
All LLM methods use MiniMax-M3~\citep{minimax2026m3api}. $\dagger$ denotes independent MOLLEO/ExLLM reimplementations, while SEISMO uses its released prompts and trajectory protocol.
Table rankings compare five-run means.
\ours{} scores 26 seed molecules within budget. Defaults are $\rho_0=1$, $\gamma=1.5$, $\rho_{\max}=100$, and top-5 retrieval.

E1 compares five original baselines. E2 compares six online strategies because Graph-GA uses both fixed-penalty (FP) and feasibility-first (FF) search. FP keeps violation weights constant within each run. FF prioritizes constraint satisfaction over target score. All online baselines use the same seven constraints, ADMET-AI~1.4.0 predictions, and thresholds from their first evaluation. Original trajectories receive separate post-hoc annotation.
Aggregate uncertainty averages the five fixed tasks within each run, then uses five run-level observations.
We report unadjusted 95\% intervals. E2--E1 differences use run-index matching, and cross-method differences use independent-group errors (Appendix~\ref{app:statistics}).

\begin{figure}[t]
\centering\includegraphics[width=\linewidth,trim=0 18bp 0 0bp,clip]{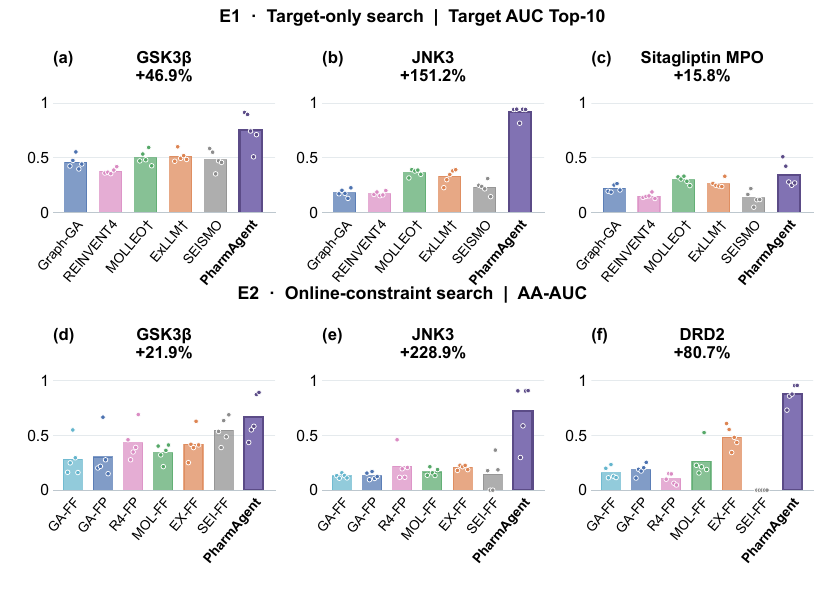}
\caption{Task-level comparisons against (a--c) original baselines in E1 and (d--f) all six online strategies in E2. Bars show means, dots all five runs, and percentages gains over the strongest baseline in each panel. The selected tasks highlight leading means. Full task results appear in Appendix~\ref{app:statistics}. GA, R4, MOL, EX, and SEI abbreviate Graph-GA, REINVENT4, MOLLEO, ExLLM, and SEISMO. FF denotes feasibility-first and FP fixed-penalty search. SEISMO uses terminal-score carry-forward. $\dagger$ marks independent reimplementations.}
\label{fig:main_results}
\end{figure}

\subsection{E1 Target-Only Search Capability}
\label{sec:e1}
\begin{table}[htbp]
\caption{Standard PMO target AUC Top-10 (higher is better), averaged over five runs. Sum adds the five task means. All LLM methods use MiniMax-M3. $\dagger$ and SEISMO's original short-run protocol with terminal-score carry-forward are defined in Section~\ref{sec:benchmark}. Best/second-best means are bold/underlined.}
\label{tab:standard_pmo}
\centering
\small
\begin{tabular}{lcccccc}
\toprule
Method & GSK3$\beta$ & JNK3 & DRD2 & Sitagliptin & Albuterol & Sum \\
\midrule
Graph-GA & 0.4600 & 0.1814 & 0.6939 & 0.2213 & 0.4423 & 1.9988 \\
REINVENT4 & 0.3792 & 0.1742 & 0.6917 & 0.1513 & 0.3947 & 1.7911 \\
MOLLEO$^{\dagger}$ & 0.5028 & \underline{0.3658} & 0.8041 & \underline{0.2999} & 0.8840 & \underline{2.8565} \\
ExLLM$^{\dagger}$ & \underline{0.5150} & 0.3302 & 0.7981 & 0.2657 & 0.8860 & 2.7950 \\
SEISMO & 0.4850 & 0.2326 & \textbf{0.9756} & 0.1344 & \textbf{0.9791} & 2.8067 \\
\ours{} & \textbf{0.7564} & \textbf{0.9186} & \underline{0.9494} & \textbf{0.3472} & \underline{0.9492} & \textbf{3.9208} \\
\bottomrule
\end{tabular}
\end{table}
E1 disables ADMET guidance and Lagrangian penalties, retaining target-only scoring and validity checks.
\ours{} achieves the highest target-AUC sum of 3.9208, surpassing MOLLEO$^\dagger$ at 2.8565 by 37.3\%.
It leads GSK3$\beta$, JNK3, and Sitagliptin and ranks second on DRD2 and Albuterol.
The aggregate target-AUC difference from each of the four original full-budget baselines has a positive 95\% interval (Appendix~\ref{app:statistics}).

\subsection{E2 Search with Online Constraints}
\label{sec:e2}
\begin{table}[htbp]
\caption{Online-constraint results across five tasks and five runs. Target AUC Sum adds task means. The other metrics average tasks and runs. All methods receive online constraints. FF denotes feasibility-first and FP fixed-penalty search. Best/second-best means are bold/underlined. Figure~\ref{fig:online_controls} reports uncertainty.}
\label{tab:admet_pmo}
\centering\small
\setlength{\tabcolsep}{4pt}
\begin{tabular}{lccccc}\toprule
Method & Target AUC Sum & AA-AUC & PHV & Top-1 & Top-10 \\\midrule
Graph-GA (FF) & 1.0675 & 0.2536 & 0.4181 & 0.4375 & 0.3219 \\
Graph-GA (FP) & 1.1809 & 0.2711 & 0.4630 & 0.4837 & 0.3575 \\
REINVENT4 (FP) & 1.8337 & 0.2859 & 0.5622 & 0.6221 & 0.4663 \\
MOLLEO$^\dagger$ (FF) & 2.0465 & 0.4006 & 0.6430 & \underline{0.6788} & 0.5864 \\
ExLLM$^\dagger$ (FF) & 2.2732 & \underline{0.4602} & \underline{0.6514} & 0.6647 & \underline{0.6352} \\
SEISMO (FF) & \underline{2.9319} & 0.3838 & 0.5795 & 0.6405 & 0.5978 \\
\ours{} (E2) & \textbf{3.6936} & \textbf{0.7076} & \textbf{0.8248} & \textbf{0.8514} & \textbf{0.8305} \\
\bottomrule\end{tabular}
\end{table}
\ours{} leads all six online strategies on every aggregate metric in Table~\ref{tab:admet_pmo}.
Its target-AUC sum of 3.6936 exceeds the strongest baseline, SEISMO FF (2.9319), by 26.0\%.

For AA-AUC and PHV, ExLLM FF is the strongest baseline. \ours{} improves its 0.4602 AA-AUC to 0.7076 (53.8\%) and its 0.6514 PHV to 0.8248 (26.6\%).
The AA-AUC difference is 0.2474, with an unadjusted 95\% interval of $[0.1355,0.3594]$.

\begin{figure}[htbp]
\centering\includegraphics[width=\linewidth]{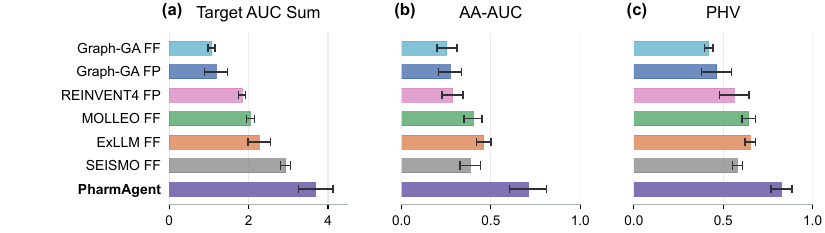}
\caption{Online comparison across five tasks and five runs. (a--c) Target-AUC sum, AA-AUC, and PHV for six baseline strategies and PharmAgent E2. Feasibility-first (FF) search prioritizes constraint satisfaction. Fixed-penalty (FP) search penalizes total normalized violation with a weight fixed within each run. Bars show means with unadjusted 95\% intervals after within-run task aggregation. All methods have the same budget allowance. SEISMO retains its short-run protocol and terminal-score carry-forward.}
\label{fig:online_controls}
\end{figure}
\ours{} leads on target-AUC sum (Figure~\ref{fig:online_controls}(a)), AA-AUC (Figure~\ref{fig:online_controls}(b)), and PHV (Figure~\ref{fig:online_controls}(c)), with positive difference intervals against every online strategy on all three metrics.
With shared online constraint measurements, \ours{} leads target progress, feasible-solution progress, and Pareto coverage.
Its target-AUC advantage extends to all five tasks, and it also leads AA-AUC on GSK3$\beta$, JNK3, and DRD2.

\paragraph{Impact of online constraint control.}
Post-hoc evaluation of E1's unchanged trajectories gives AA-AUC 0.5328, versus E2's 0.7076, a 32.8\% improvement with difference interval $[0.0511,0.2987]$.
All five run-level aggregate differences are positive.
Figure~\ref{fig:main_results}(a--c) highlights target-search gains over the original baselines, while Figure~\ref{fig:main_results}(d--f) shows AA-AUC gains over their online adaptations. Appendix~\ref{app:statistics} provides budgeted progress in Figure~\ref{fig:search_evidence}(a) and all five E2--E1 task effects in Figure~\ref{fig:search_evidence}(b). Task-level AA-AUC intervals exclude zero for DRD2, Sitagliptin, and Albuterol.
Online feedback thus turns strong target search into stronger feasible-solution search.

\begin{figure}[t]
\centering\includegraphics[width=\linewidth,trim=0 3bp 0 17bp,clip]{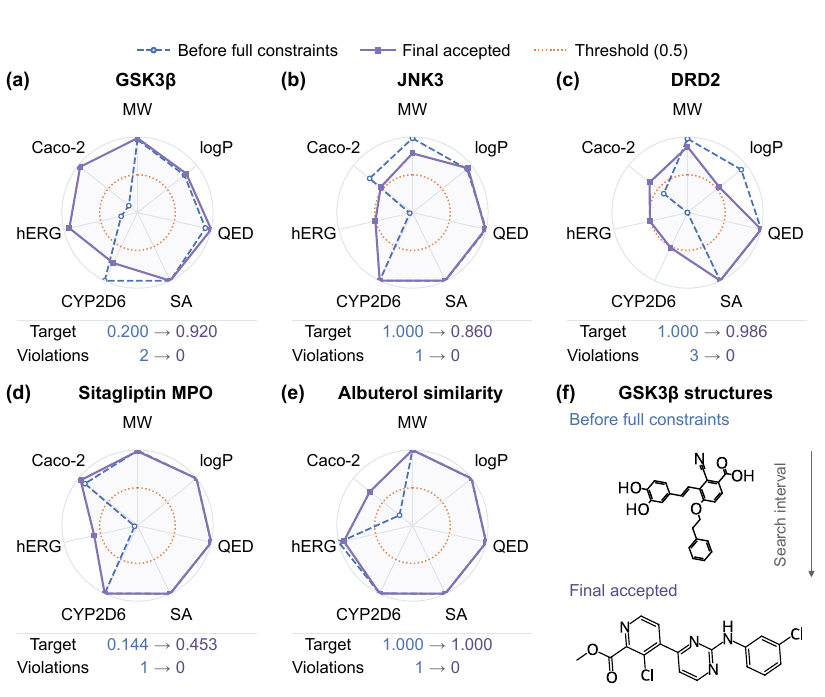}
\caption{Five molecular cases (run~0, before full constraint activation at round~59 versus final accepted state). (a--e) Sigmoid desirability profiles under all seven requirements. Outward is favorable, and the 0.5 ring marks thresholds. CYP2D6 denotes inhibition. Target scores and violation counts accompany each profile. (f) Recorded GSK3$\beta$ structures with an arrow spanning subsequent rounds. ADMET endpoints are predictions.}
\label{fig:cases}
\end{figure}

\subsection{Mechanism Ablations}
\label{sec:ablation}
\begin{table}[htbp]
\caption{ADMET-PMO ablations over five tasks and five runs. The full-system reference matches Table~\ref{tab:admet_pmo}. Target AUC Sum adds task means. AA-AUC and PHV average tasks and runs. Best/second-best means are bold/underlined.}
\label{tab:ablation}
\centering
\small
\begin{tabular}{lccc}
\toprule
Variant & Target AUC Sum & AA-AUC & PHV \\
\midrule
\ours{} (full system) & \textbf{3.6936} & \textbf{0.7076} & \textbf{0.8248} \\
\quad w/o Lagrangian & \underline{3.6535} & 0.4781 & 0.7815 \\
\quad w/o Chemical-CER & 3.4008 & 0.5459 & \underline{0.7904} \\
\quad w/o Curriculum & 3.1981 & \underline{0.6260} & 0.7295 \\
\quad w/o MedChem-RAG & 2.9977 & 0.4146 & 0.6593 \\
\quad w/o Reviewer agent & 3.3024 & 0.5566 & 0.7100 \\
\quad Single-agent baseline & 3.2698 & 0.3000 & 0.6546 \\
\bottomrule
\end{tabular}
\end{table}
The full system outperforms all six variants on all three aggregate metrics (Table~\ref{tab:ablation}).
Removing Lagrangian feedback reduces AA-AUC by 32.4\%, compared with a 1.1\% reduction in target-AUC sum. The loss is concentrated in feasible-solution progress.
Removing CER reduces target-AUC sum by 7.9\% and AA-AUC by 22.9\%. Removing Curriculum reduces them by 13.4\% and 11.5\%, and PHV by 11.6\%.
MedChem-RAG removal produces the largest target-AUC reduction among component removals (18.8\%) and lowers AA-AUC by 41.4\%.
Reviewer removal lowers AA-AUC by 21.3\%. The full system reaches 0.7076 AA-AUC versus 0.3000 for the single-agent configuration.
The distinct loss profiles reveal complementary contributions from constraint feedback, structural experience, scheduling, chemical priors, and review.

\subsection{Molecular Cases of Property-Violation Resolution}
\label{sec:cases}
Figure~\ref{fig:cases}(a--e) compares property profiles before full constraint activation (round~59) and at termination, using run~0 uniformly across tasks. Figure~\ref{fig:cases}(f) shows the corresponding GSK3$\beta$ structures.
\ours{} resolves property violations in all five illustrated trajectories. Every earlier state violates at least one of the final seven requirements, and every final state satisfies all seven.
Target scores increase for GSK3$\beta$ (0.20 to 0.92) and Sitagliptin (0.1443 to 0.4533), remain at 1.00 for Albuterol, and decrease for JNK3 (1.00 to 0.86) and DRD2 (1.0000 to 0.9864).
For GSK3$\beta$, the predicted hERG block score falls from 0.5754 to 0.3496 (bound 0.5), while Caco-2 increases from $-5.5022$ to $-4.4525$ (bound $-5.15$).
Appendix~\ref{app:cases} provides all structure pairs, raw properties, and the selection protocol.

\section{Discussion and Conclusion}
\ours{} achieves leading aggregate performance through adaptive external search state and frozen language models. E1 leads aggregate target search, and E2 outperforms all six online strategies on all five aggregate metrics. The Lagrangian controller gives constraint history an explicit role in acceptance and proposal feedback, while Chemical-CER and the curriculum supply relevant measured experience and coordinate constraint activation. Removing Lagrangian feedback lowers AA-AUC by 32.4\%. Every component removal lowers all three aggregate metrics, and all five molecular cases resolve their property violations. Together, these results establish adaptive constraint feedback and structured chemical experience as a powerful approach to molecular search with frozen models.

\paragraph{Limitations.}
Evaluation covers five tasks and predicted properties. Multiplier boundedness remains an assumption of the appendix analysis, and ablation uncertainty is unavailable. Appendix~\ref{app:limitations} summarizes the few task-level exceptions, validation needs, and computational-cost considerations.

\label{sec:main_end}
\raggedbottom
\subsection*{AI Use Statement}
Generative AI tools were used solely for language editing and polishing, including improvements to grammar, wording, clarity, and readability. All scientific content, methodology, experiments, analyses, results, and conclusions were developed by the authors. All AI-assisted edits were reviewed and verified by the authors, who take full responsibility for the final manuscript.

\subsection*{Ethics statement}
This study evaluates computational molecular optimization using predicted properties. It does not establish experimental drug safety or clinical efficacy.
Candidate structures and predicted ADMET measurements require independent validation before practical use.
We discuss potential misuse and the limits of surrogate evaluation in Appendix~\ref{app:impact}.

\subsection*{Reproducibility statement}
We will release the complete source code for running the \ours{} agent system upon acceptance.

\bibliographystyle{iclr2027_conference}
\bibliography{references}

%=============================================================================
\newpage
\appendix
\setcounter{figure}{0}
\renewcommand{\thefigure}{A\arabic{figure}}
\renewcommand{\theHfigure}{appendix.\arabic{figure}}
\setcounter{table}{0}
\renewcommand{\thetable}{A\arabic{table}}
\renewcommand{\theHtable}{appendix.\arabic{table}}
\setcounter{algorithm}{0}
\renewcommand{\thealgorithm}{A\arabic{algorithm}}
\providecommand{\theHalgorithm}{\arabic{algorithm}}
\renewcommand{\theHalgorithm}{appendix.\arabic{algorithm}}
\setcounter{equation}{0}
\renewcommand{\theequation}{A\arabic{equation}}
\renewcommand{\theHequation}{appendix.\arabic{equation}}
\setlength{\textfloatsep}{9pt plus 2pt minus 2pt}
\setlength{\floatsep}{8pt plus 2pt minus 2pt}
\setlength{\intextsep}{9pt plus 2pt minus 2pt}
\captionsetup[figure]{skip=5pt}
\captionsetup[table]{skip=5pt}

\FloatBarrier
\section*{Appendix}
The appendices follow the main paper's questions. Appendix~\ref{app:proof} derives multiplier accumulation and conditional feasibility. Appendix~\ref{app:statistics} reports run-level uncertainty and complete task results. Appendix~\ref{app:cases} documents the molecular cases, and Appendix~\ref{app:online} details the online adaptations, task-level margins, and changes from the original protocols. Appendices~\ref{app:scales}--\ref{app:compute} specify metrics, prompts, replay, and computational accounting. Appendix~\ref{app:limitations} collects the limitations and remaining validation questions.

\section{Cumulative Violation and Conditional Feasibility}
\label{app:proof}

\paragraph{Update sequence.}
Here $t$ indexes completed updates after the full constraint set is active. Empty or failed rounds are excluded, and $\boldsymbol\lambda^{(0)}\geq0$ retains earlier-phase history.
The state $\bar x^{(t)}$ is the accepted candidate or retained incumbent from Eq.~\ref{eq:accept}. Write $v_i^{(t)}=v_i(\bar x^{(t)})\geq0$ and $V^{(t)}=\sum_i v_i^{(t)}$.

\paragraph{Exact accumulation.}
Since $\rho^{(t)}>0$ and violations are nonnegative, Eq.~\ref{eq:dual_update} keeps every multiplier nonnegative. Any nonnegative projection is therefore inactive.
Telescoping the update for a single constraint gives
\[
\lambda_i^{(T)}-\lambda_i^{(0)}
=\sum_{t=0}^{T-1}\rho^{(t)}v_i^{(t)}.
\]
Summing over $i$ yields Eq.~\ref{eq:violation_accounting}. These identities hold for every finite update sequence and follow algebraically from the rule, without assumptions on convexity or proposal quality.

\begin{proposition}[Conditional vanishing violation]
\label{prop:conditional_feasibility}
Consider an infinite sequence of completed updates under Eq.~\ref{eq:dual_update} after full constraint activation.
Suppose $\rho^{(t)}\geq\rho_{\min}>0$ for all $t$ and, for each $i$, there is a finite constant $M_i$ such that $\lambda_i^{(t)}\leq M_i$ for all $t$.
Then $\sum_{t=0}^{\infty}V^{(t)}<\infty$ and $V^{(t)}\to0$.
\end{proposition}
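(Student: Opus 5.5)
The argument runs through the exact accumulation identity derived just above (the telescoped form of Eq.~\ref{eq:violation_accounting}). Fix a finite horizon $T$. Summing over $i$ gives
\[
\sum_{t=0}^{T-1}\rho^{(t)}V^{(t)}=\sum_{i=1}^{K}\bigl(\lambda_i^{(T)}-\lambda_i^{(0)}\bigr).
\]
We bound the right-hand side uniformly in $T$. The hypothesis gives $\lambda_i^{(T)}\le M_i$, and the update-sequence paragraph gives $\lambda_i^{(0)}\ge 0$ (earlier-phase history is nonnegative). Hence the right-hand side is at most $M:=\sum_{i=1}^K M_i$, a finite constant independent of $T$.

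Next we convert this into a bound on the unweighted violations. Because $\rho^{(t)}\ge\rho_{\min}>0$ and $V^{(t)}\ge 0$, each term satisfies $\rho_{\min}V^{(t)}\le\rho^{(t)}V^{(t)}$. Therefore
\[
\sum_{t=0}^{T-1}V^{(t)}\le \frac{M}{\rho_{\min}}\quad\text{for every }T.
\]
The partial sums of the nonnegative series $\sum_t V^{(t)}$ are nondecreasing and bounded above. By monotone convergence the series converges, with $\sum_{t=0}^\infty V^{(t)}\le M/\rho_{\min}<\infty$. The terms of a convergent series tend to zero, so $V^{(t)}\to 0$. Since each $v_i^{(t)}\le V^{(t)}$, every individual violation of the accepted or retained state also vanishes.

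There is no serious obstacle; the only point requiring care is bookkeeping. The index $t$ must run over completed updates after full activation, so that the active set equals all $K$ constraints and every step obeys Eq.~\ref{eq:dual_update} with a strictly positive $\rho^{(t)}$. In that regime no resets occur, so $\rho^{(t)}$ is nondecreasing from $\rho_0$ and the $\rho_{\min}$ assumption is in fact automatic with $\rho_{\min}=\rho_0$. We state it anyway to match the proposition. The substantive assumption is the multiplier bound $M_i$, which the proposition takes as given rather than derives.
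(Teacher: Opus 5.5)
Your proof is correct and follows the paper's argument: you telescope the multiplier update into the accumulation identity, bound the right-hand side uniformly in $T$ via the multiplier bounds, divide by $\rho_{\min}$, and conclude that the series is summable and hence $V^{(t)}\to0$. The only difference is cosmetic. You drop $\lambda_i^{(0)}$ using $\lambda_i^{(0)}\ge0$ to obtain the bound $\sum_i M_i/\rho_{\min}$, whereas the paper keeps the slightly tighter $\sum_i (M_i-\lambda_i^{(0)})/\rho_{\min}$. Your side remark that $\rho^{(t)}\ge\rho_0$ holds automatically matches the paper's, provided $\gamma\ge1$ and $\rho_{\max}\ge\rho_0$.
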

\begin{proof}
For every finite $T$, the accumulation identity gives
\[
0\leq\sum_{t=0}^{T-1}\rho^{(t)}V^{(t)}
=\sum_{i=1}^{K}(\lambda_i^{(T)}-\lambda_i^{(0)})
\leq\sum_{i=1}^{K}(M_i-\lambda_i^{(0)}).
\]
The upper bound is independent of $T$. Nonnegativity and $\rho^{(t)}\geq\rho_{\min}$ therefore imply
\[
\sum_{t=0}^{\infty}V^{(t)}
\leq\frac{1}{\rho_{\min}}\sum_{i=1}^{K}(M_i-\lambda_i^{(0)})<\infty.
\]
A summable nonnegative sequence tends to zero.
\end{proof}

\paragraph{Interpretation of the assumptions.}
For $\gamma\geq1$ and $\rho_{\max}\geq\rho_0>0$, the penalty update and active-set resets preserve $\rho^{(t)}\geq\rho_0$.
The cap on $\rho$ does not bound the multipliers, whose increments continue to accumulate.
Uniform multiplier boundedness is an assumption of Proposition~\ref{prop:conditional_feasibility} and is not established for the implemented LLM search.
The proposition describes a consequence of controlled cumulative violation, not an unconditional convergence guarantee for the search.

\paragraph{Limit points.}
Under the proposition's assumptions, suppose a subsequence $\bar x^{(t_m)}$ converges to $x^\star\in\mathcal X$ in a fixed metric and each $v_i$ is continuous at $x^\star$.
Then $0\leq v_i(\bar x^{(t_m)})\leq V^{(t_m)}\to0$ implies $v_i(x^\star)=0$ for every $i$.
This establishes feasibility of such a limit point, without asserting that the molecular sequence has a limit point or converges.

%=============================================================================
\FloatBarrier
\section{Statistical Reporting and Detailed Results}
\label{app:statistics}
All five tasks and five runs are retained. For a fixed-task aggregate, first average the five task values within each run index and then estimate uncertainty from the five resulting observations. Individual molecules and the 25 task/run cells are not treated as independent repetitions.
For values $z_1,\ldots,z_5$, intervals are $\bar z\pm t_{0.975,4}s_z/\sqrt5$, with $t_{0.975,4}=2.776445$.
For E2--E1 comparisons, $z_j$ is the within-index difference. Index matching is not a claim of common LLM random draws.
Cross-method differences use $\sqrt{s_A^2/5+s_B^2/5}$ and the same conservative critical value.
These intervals assume approximately normal run-level variation and are unadjusted descriptive intervals, not simultaneous claims over all comparisons.
AUC sums are five times their corresponding fixed-task means, so their intervals scale by five and provide no additional independent evidence.

For aggregate E2--E1 AA-AUC, the difference is 0.1749 with matched interval $[0.0511,0.2987]$. The independent-group sensitivity check gives $[0.0195,0.3303]$.
E2--E1 aggregate target AUC, Top-1, Top-10, and PHV intervals cross zero, which does not establish equivalence.
Against original MOLLEO with post-hoc annotation, E2 has positive aggregate difference intervals for Top-1, target AUC, AA-AUC, and PHV. The final Top-10 interval crosses zero. Against all six online strategies, all 18 target-AUC, AA-AUC, and PHV difference intervals are positive (Figure~\ref{fig:online_summary}(d--f)). These are separate, unadjusted comparisons. The online AA-AUC differences are 0.2474 versus ExLLM FF, 0.3070 versus MOLLEO FF, and 0.3238 versus SEISMO FF.
\begin{figure}[htbp]
\centering\includegraphics[width=\linewidth]{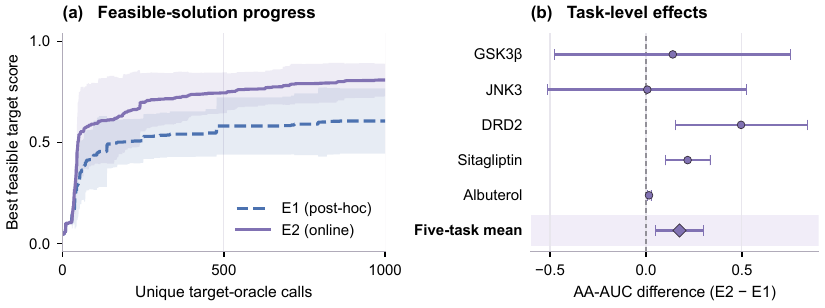}
\caption{Online control improves feasible-solution search. (a) Best feasible target score versus unique oracle calls, including initialization. Lines average five task-averaged runs, with pointwise 95\% intervals. (b) E2--E1 AA-AUC differences for every task and the fixed five-task mean, with run-index-matched 95\% intervals ($n=5$, unadjusted). Shared indices do not imply shared LLM draws. An independent-group aggregate check remains positive (Appendix~\ref{app:statistics}).}
\label{fig:search_evidence}
\end{figure}
\paragraph{Online and original-protocol comparisons.}
Figures~\ref{fig:supp_online_auc_top_10}--\ref{fig:supp_online_pareto_hypervolume} show every online strategy. Figures~\ref{fig:supp_auc_top_10}--\ref{fig:supp_pareto_hypervolume} retain the original baselines and PharmAgent E1/E2.
In each figure, panels (a--e) show GSK3$\beta$, JNK3, DRD2, Sitagliptin MPO, and Albuterol similarity, respectively, and panel (f) shows the fixed-task aggregate.
All five runs are shown. Table~\ref{tab:posthoc} retains the post-hoc aggregate comparison. Relative to original MOLLEO, E2 improves target-AUC sum by 29.3\%, AA-AUC by 70.7\%, and PHV by 17.6\%.

\clearpage
\begin{figure}[p]\centering
\includegraphics[width=\linewidth]{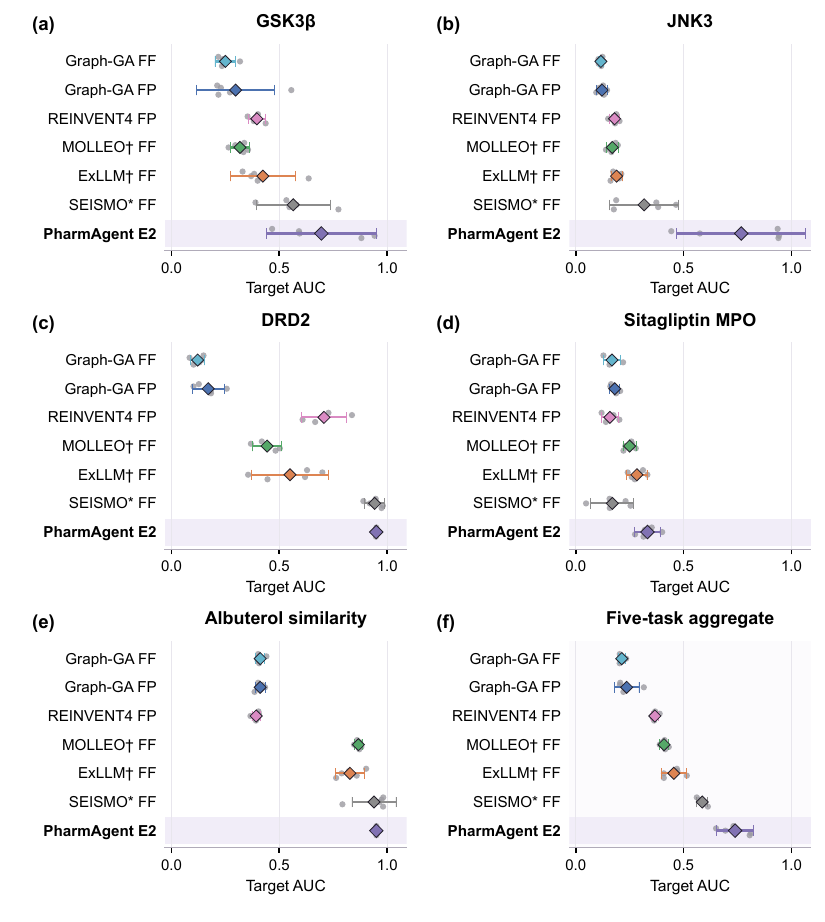}
\caption{Target AUC across all six online baseline strategies and PharmAgent E2. Diamonds denote means, grey dots the five runs, and bars unadjusted 95\% Student-$t$ intervals. Panels (a--e) show the five tasks. Panel (f) averages tasks within each run before estimating uncertainty. Intervals are not clipped to $[0,1]$. $\dagger$ denotes independent MiniMax-M3 reimplementations. *SEISMO follows its short-run protocol with terminal-score carry-forward. FF denotes feasibility-first and FP fixed-penalty search.}
\label{fig:supp_online_auc_top_10}
\end{figure}

\clearpage
\begin{figure}[p]\centering
\includegraphics[width=\linewidth]{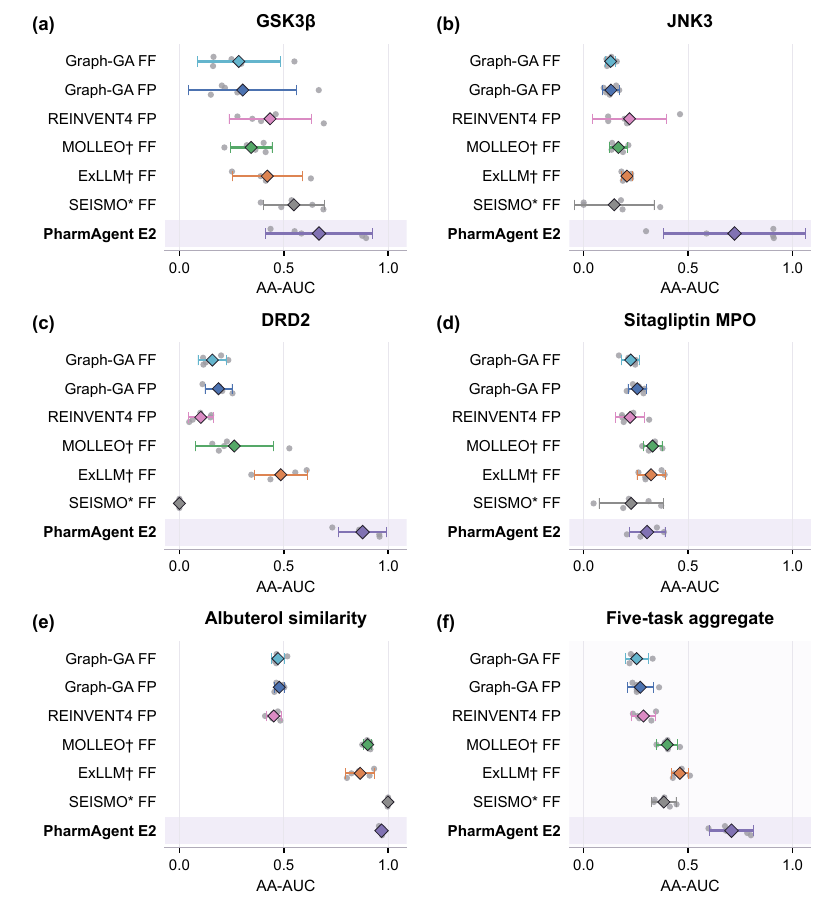}
\caption{AA-AUC across all six online baseline strategies and PharmAgent E2. Diamonds denote means, grey dots the five runs, and bars unadjusted 95\% Student-$t$ intervals. Panels (a--e) show the five tasks. Panel (f) averages tasks within each run before estimating uncertainty. Intervals are not clipped to $[0,1]$. $\dagger$ denotes independent MiniMax-M3 reimplementations. *SEISMO follows its short-run protocol with terminal-score carry-forward. FF denotes feasibility-first and FP fixed-penalty search.}
\label{fig:supp_online_aa_auc}
\end{figure}

\clearpage
\begin{figure}[p]\centering
\includegraphics[width=\linewidth]{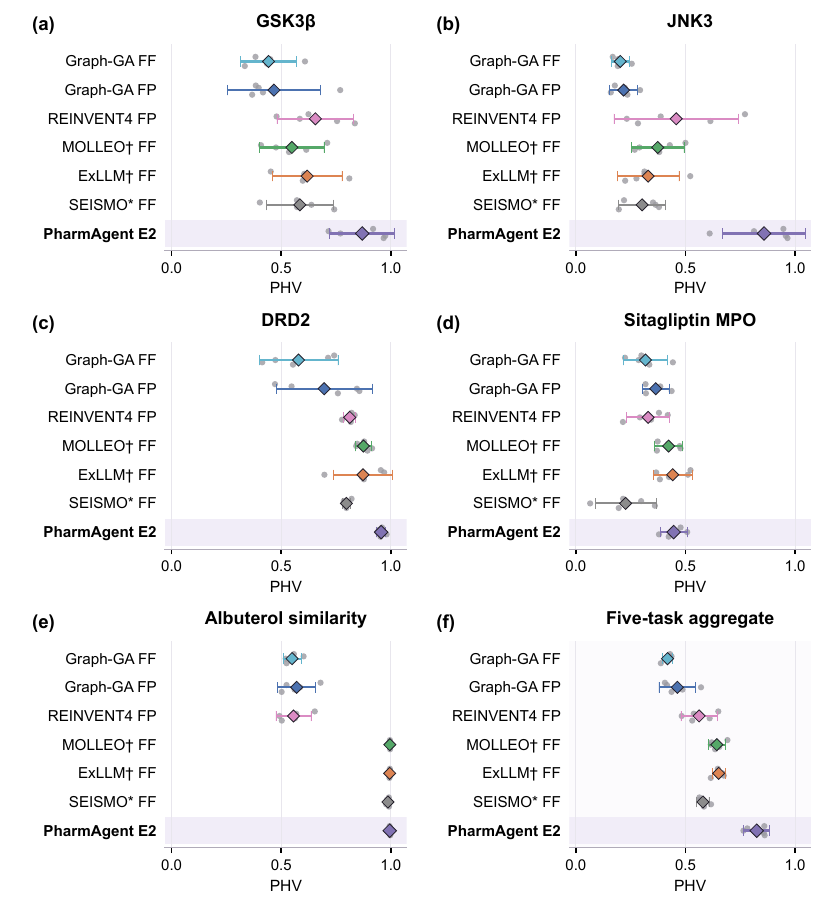}
\caption{Pareto hypervolume across all six online baseline strategies and PharmAgent E2. Diamonds denote means, grey dots the five runs, and bars unadjusted 95\% Student-$t$ intervals. Panels (a--e) show the five tasks. Panel (f) averages tasks within each run before estimating uncertainty. Intervals are not clipped to $[0,1]$. $\dagger$ denotes independent MiniMax-M3 reimplementations. *SEISMO follows its short-run protocol with terminal-score carry-forward. FF denotes feasibility-first and FP fixed-penalty search.}
\label{fig:supp_online_pareto_hypervolume}
\end{figure}

\clearpage
\begin{figure}[p]\centering
\includegraphics[width=\linewidth]{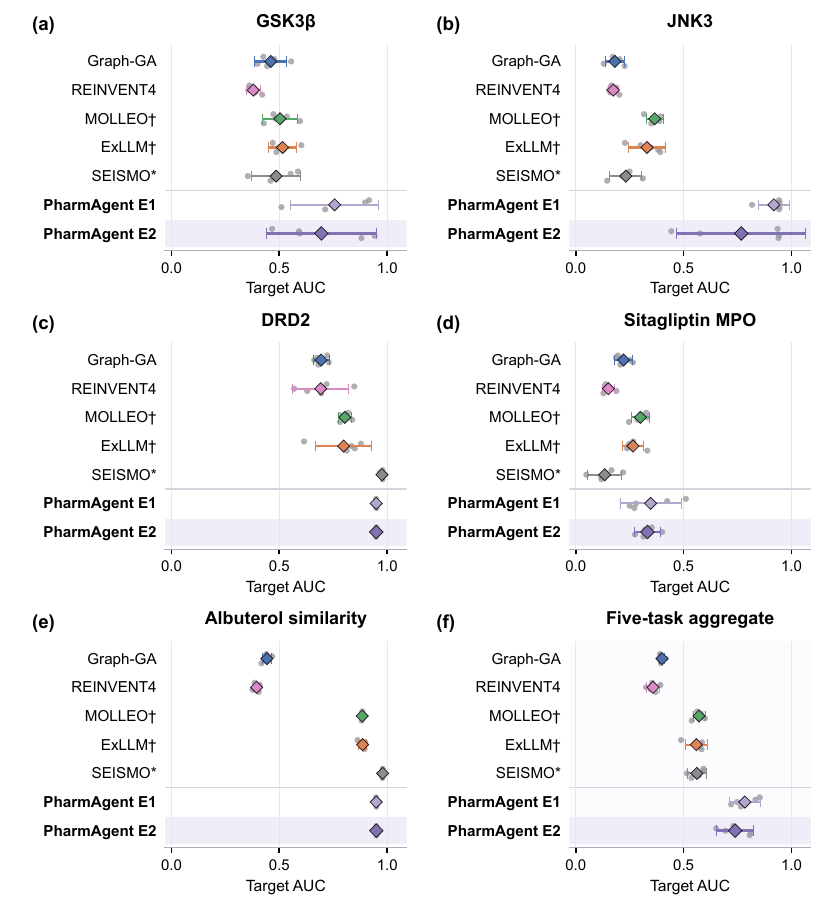}
\caption{Target AUC for the five original baselines and PharmAgent E1/E2. Baselines and E1 receive post-hoc property annotation, while E2 uses online constraints. Diamonds denote means, grey dots the five runs, and bars unadjusted 95\% Student-$t$ intervals. Panels (a--e) show the five tasks. Panel (f) averages tasks within each run before estimating uncertainty. Intervals are not clipped to $[0,1]$. $\dagger$ denotes independent MiniMax-M3 reimplementations. *SEISMO follows its short-run protocol with terminal-score carry-forward.}
\label{fig:supp_auc_top_10}
\end{figure}

\clearpage
\begin{figure}[p]\centering
\includegraphics[width=\linewidth]{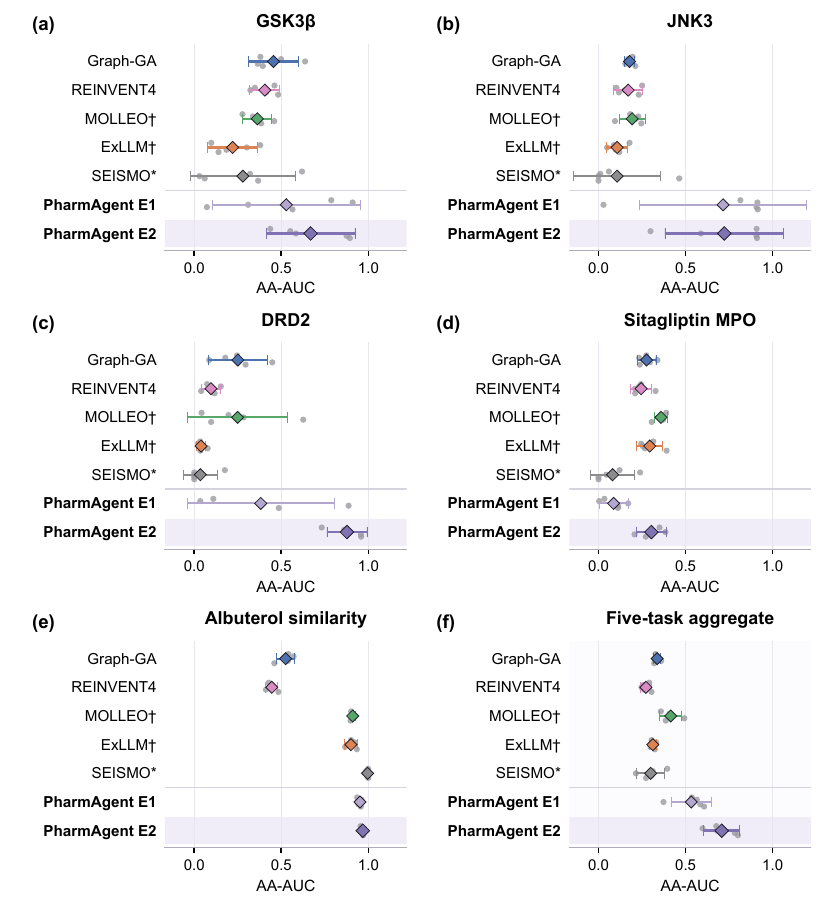}
\caption{AA-AUC for the five original baselines and PharmAgent E1/E2. Baselines and E1 receive post-hoc property annotation, while E2 uses online constraints. Diamonds denote means, grey dots the five runs, and bars unadjusted 95\% Student-$t$ intervals. Panels (a--e) show the five tasks. Panel (f) averages tasks within each run before estimating uncertainty. Intervals are not clipped to $[0,1]$. $\dagger$ denotes independent MiniMax-M3 reimplementations. *SEISMO follows its short-run protocol with terminal-score carry-forward.}
\label{fig:supp_aa_auc}
\end{figure}

\clearpage
\begin{figure}[p]\centering
\includegraphics[width=\linewidth]{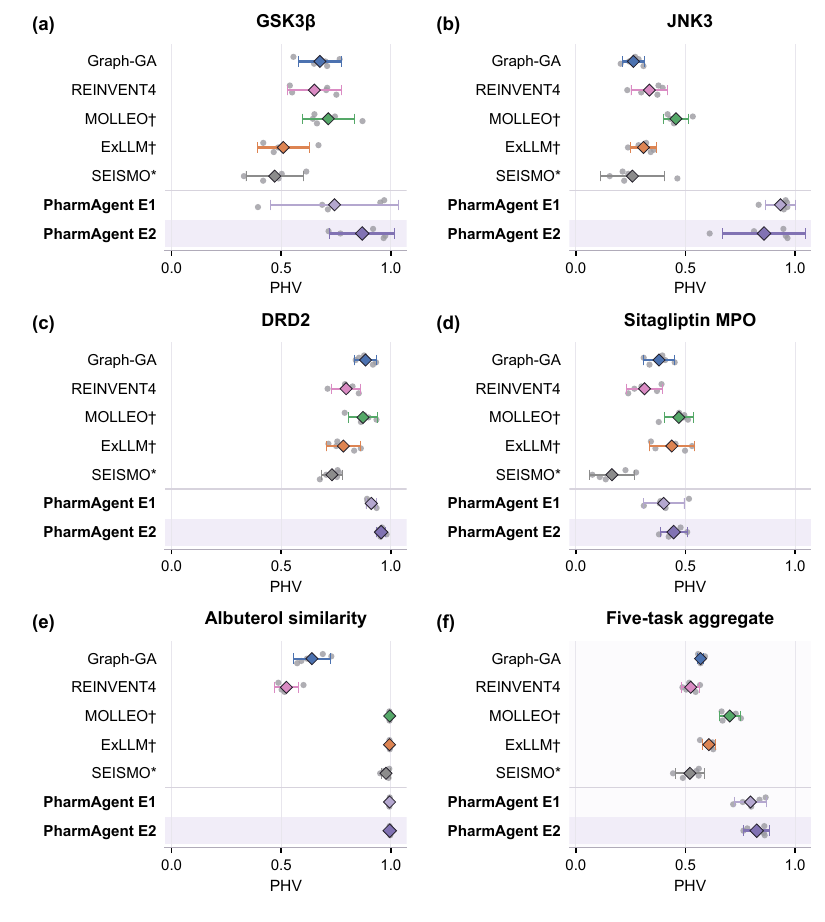}
\caption{Pareto hypervolume for the five original baselines and PharmAgent E1/E2. Baselines and E1 receive post-hoc property annotation, while E2 uses online constraints. Diamonds denote means, grey dots the five runs, and bars unadjusted 95\% Student-$t$ intervals. Panels (a--e) show the five tasks. Panel (f) averages tasks within each run before estimating uncertainty. Intervals are not clipped to $[0,1]$. $\dagger$ denotes independent MiniMax-M3 reimplementations. *SEISMO follows its short-run protocol with terminal-score carry-forward.}
\label{fig:supp_pareto_hypervolume}
\end{figure}
\clearpage

\FloatBarrier
\section{Molecular Case Selection and Property Measurements}
\label{app:cases}
The five cases in Figure~\ref{fig:cases} use a fixed rule that compares the accepted state at round~59 with the final accepted state in run~0 for every task.
Round~59 is the last round before all seven constraints are active under the prescribed schedule.
The same run and stage endpoints are used for every task, independently of the magnitude of the target gain.
A state can persist across many rounds. Its recorded evaluation call can therefore precede the displayed round.
All plotted properties reuse stored measurements, and RDKit renders the corresponding structures.

For each raw property $g_i$, the radar radius is the existing desirability
$d_i=\sigma(-5(\tilde g_i-\tilde t_i)/r_i)$, using the same directions, thresholds, and scales as Eq.~\ref{eq:admet_score}.
Thus a threshold maps to 0.5 and larger radii always indicate greater satisfaction.
No normalization is fitted to the displayed cases and polygon area is not used as a metric.
The sigmoid can saturate, so Table~\ref{tab:case_properties} retains raw values for inspection.
The final accepted state and the historical best feasible molecule are distinct search outputs. In JNK3 run~0, for example, the final accepted target score is 0.86, while the historical best feasible score is 0.95.
The full paired structures are shown in Figure~\ref{fig:all_case_structures}(a--e), in the same task order.
\begin{figure}[htbp]
\centering\includegraphics[width=\linewidth,trim=0 44bp 0 12bp,clip]{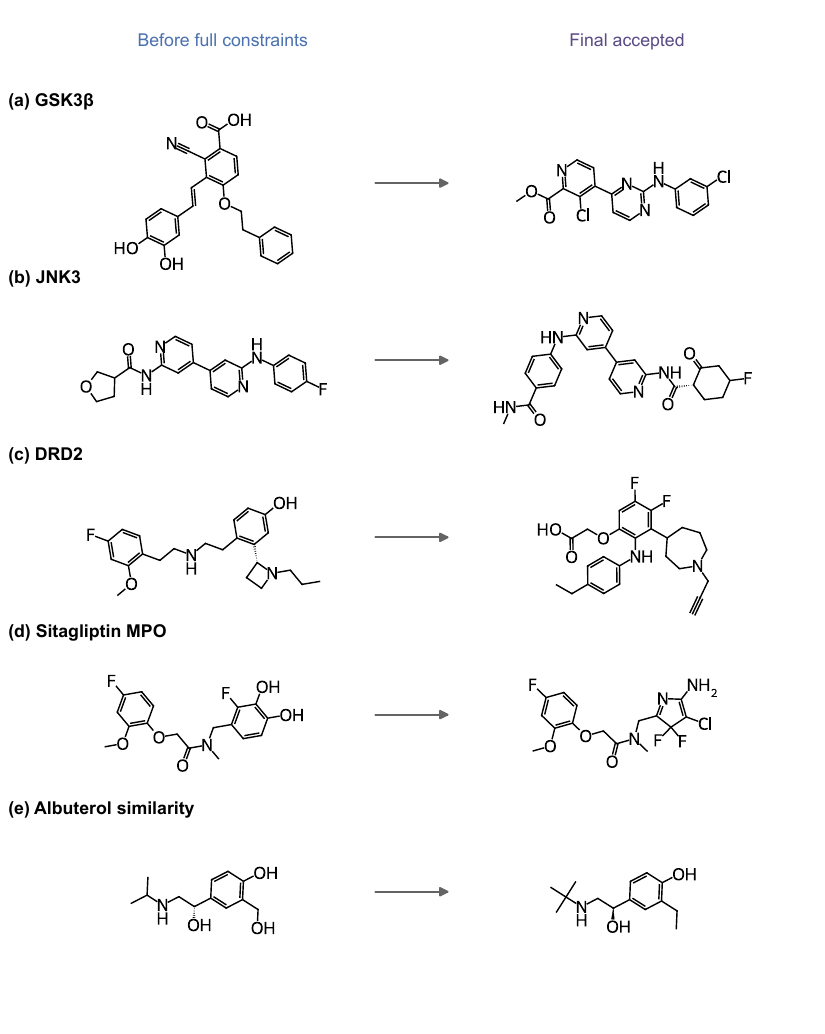}
\caption{All five recorded accepted-state structure pairs from E2 underlying Figure~\ref{fig:cases}. Panels (a--e) show GSK3$\beta$, JNK3, DRD2, Sitagliptin MPO, and Albuterol similarity. Every task uses run~0, comparing the state before full constraint activation (round~59) with the final accepted state. All structures use a common bond-length scale, atom-label size, and line width. Structures are drawn from original canonical SMILES. Target scores and violation counts appear in Figure~\ref{fig:cases}. Raw properties are reported in Table~\ref{tab:case_properties}.}
\label{fig:all_case_structures}
\end{figure}
\begin{table}[htbp]
\caption{Raw computed or predicted properties for the ten states in Figure~\ref{fig:cases}. All cases use run~0. Bold values violate the displayed threshold. CYP denotes CYP2D6 inhibition. MW is in Da and Caco-2 follows the predictor scale. R59 means the accepted state at round~59. Final means the final accepted state.}
\label{tab:case_properties}
\centering\footnotesize
\setlength{\tabcolsep}{3pt}
\begin{tabular}{llrrrrrrr}\toprule
Task & State & MW & logP & QED & SA & CYP & hERG & Caco-2 \\
Bound & & $\leq500$ & $\leq5$ & $\geq0.3$ & $\leq6$ & $\leq0.5$ & $\leq0.5$ & $\geq-5.15$ \\\midrule
GSK3$\beta$ & R59 & 401.418 & 4.4596 & 0.3982 & 2.4764 & 0.0553 & \textbf{0.5754} & \textbf{-5.5022} \\
GSK3$\beta$ & Final & 375.215 & 4.3756 & 0.6825 & 2.3569 & 0.4371 & 0.3496 & -4.4525 \\
\addlinespace[2pt]
JNK3 & R59 & 378.407 & 4.0013 & 0.7018 & 2.7675 & 0.0638 & \textbf{0.6968} & -4.9562 \\
JNK3 & Final & 461.497 & 3.8926 & 0.4802 & 3.4456 & 0.0391 & 0.4993 & -5.1188 \\
\addlinespace[2pt]
DRD2 & R59 & 386.511 & 4.0715 & 0.6044 & 2.9680 & \textbf{0.9720} & \textbf{0.9889} & \textbf{-5.2292} \\
DRD2 & Final & 442.506 & 4.9369 & 0.5675 & 3.1650 & 0.4963 & 0.4977 & -5.0343 \\
\addlinespace[2pt]
Sitagliptin & R59 & 353.321 & 2.4220 & 0.7800 & 2.2937 & 0.1481 & \textbf{0.6927} & -4.7337 \\
Sitagliptin & Final & 377.750 & 2.1280 & 0.8235 & 3.0148 & 0.0292 & 0.4788 & -4.5046 \\
\addlinespace[2pt]
Albuterol & R59 & 225.288 & 0.9159 & 0.6010 & 2.6694 & 0.1282 & 0.1020 & \textbf{-5.4044} \\
Albuterol & Final & 237.343 & 2.3761 & 0.7533 & 2.6696 & 0.1501 & 0.3435 & -4.9597 \\
\bottomrule\end{tabular}
\end{table}

\FloatBarrier
\section{Online Baseline Adaptations and Detailed Comparisons}
\label{app:online}
\subsection{Shared Comparison Design}
The six online strategies span all five baseline frameworks, with five tasks and five formal runs each (150 task/run cells). PharmAgent E2 contributes 25 reference cells. Protocols were fixed before observing the formal evaluation results. All seven constraints, their thresholds, and normalized violation scales match E2. Each unique molecule receives the same property prediction procedure. Graph-GA, REINVENT4, MOLLEO, and ExLLM reach 1,000 unique target evaluations per cell. SEISMO retains its original short-run protocol.

Figure~\ref{fig:online_summary}(a--c) shows aggregate performance with uncertainty for every online strategy. Figure~\ref{fig:online_summary}(d--f) shows the corresponding PharmAgent E2 advantage over each baseline. All 18 difference intervals are positive.
\begin{figure}[htbp]
\centering\includegraphics[width=\linewidth]{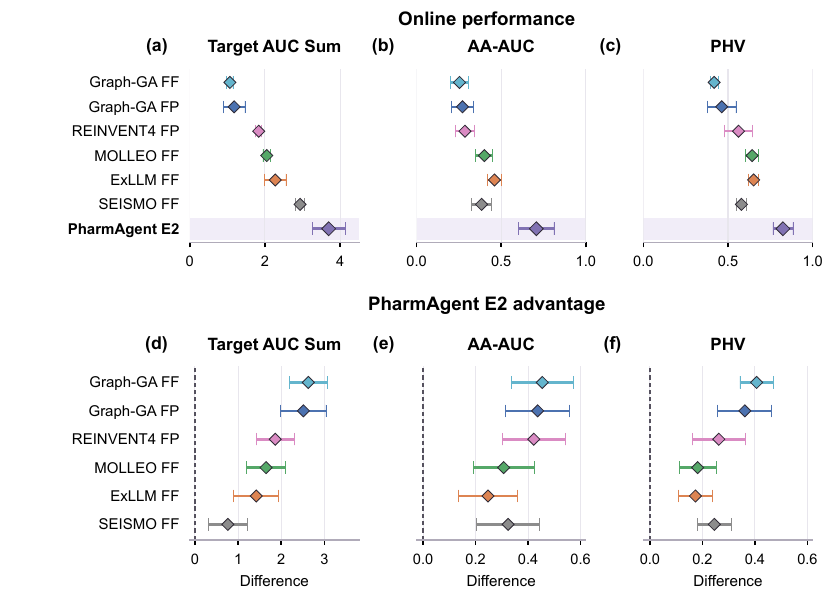}
\caption{Online performance and comparative gains across five tasks and five runs. Panels (a--c) show five-run means with unadjusted 95\% Student-$t$ intervals. Panels (d--f) show PharmAgent E2 minus each baseline with unadjusted 95\% independent-group intervals. Target AUC uses five-task sums, while AA-AUC and PHV use task means. Each interval uses five run-level aggregates per method. The zero line in (d--f) denotes equal aggregate performance. All six baseline strategies are retained. FF denotes feasibility-first and FP fixed-penalty search. SEISMO retains its short-run protocol and terminal-score carry-forward.}
\label{fig:online_summary}
\end{figure}

\subsection{Optimizer and Language-Model Adaptations}
\paragraph{Non-LLM adaptations.}
Graph-GA uses FF and FP, and REINVENT4 uses FP. All seven constraints enter rewards from the first evaluation.
FP fixes penalty weight $w$ within each run and uses reward $\sigma(4(f(x)-w\sum_i v_i(x)-0.5))$.
Graph-GA's FF reward is $0.5+0.5f(x)$ if all seven constraints pass, and $0.5\exp(-\sum_i v_i(x))$ otherwise.
Metrics use raw target and property records, separately from optimizer rewards.

For each optimizer, $w\in\{0.1,1,10\}$ is selected by mean AA-AUC over all five tasks and two separate development runs (indices 100/101, 500 calls each). Ties within $10^{-12}$ use PHV then smaller weight.
Graph-GA selects 10 and REINVENT4 selects 1. All development outcomes are retained and excluded from formal means.
Graph-GA uses population 120, offspring 70, mutation rate 0.067, and ZINC initialization.
REINVENT4 uses its original prior, batch size 64, $\sigma=128$, learning rate $10^{-4}$, and an identical Murcko-scaffold diversity filter.
All 75 non-LLM formal cells reach 1,000 unique evaluations. Invalid proposals and cache hits consume no new target calls.
The 60 development runs consume 30,000 additional target evaluations. The 75 formal runs consume 75,000.
These controls use no LLM calls, while target and ADMET measurements run on the shared server.
\begin{table}[htbp]\centering\small
\caption{Independent development runs for selecting fixed-penalty weights. Each row averages five tasks and two runs. Formal outcomes do not enter selection.}
\begin{tabular}{lcccc}\toprule Optimizer & Weight & AA-AUC & PHV & Selected \\\midrule
Graph-GA & 0.1 & 0.206613 & 0.349602 & No \\
Graph-GA & 1 & 0.209172 & 0.294164 & No \\
Graph-GA & 10 & 0.209456 & 0.290444 & Yes \\
REINVENT4 & 0.1 & 0.257339 & 0.489099 & No \\
REINVENT4 & 1 & 0.286928 & 0.542494 & Yes \\
REINVENT4 & 10 & 0.262841 & 0.506246 & No \\
\bottomrule\end{tabular}\end{table}

\paragraph{LLM feasibility-first adaptations.}
MOLLEO, ExLLM, and SEISMO retain MiniMax-M3, their task descriptions, and response formats. Prompts add the common constraint objective and relevant parents' or historical molecules' property values, thresholds, normalized violations, and feasibility. These adaptations use no PharmAgent trajectories, Chemical-CER, adaptive multipliers, or curriculum.
For MOLLEO and ExLLM population selection, feasible candidates precede infeasible candidates. Feasible candidates are ordered by decreasing raw target score and infeasible candidates by increasing total normalized positive violation. Canonical SMILES breaks ties.
MOLLEO retains roulette parent sampling using a positive feasibility reward and uses this lexicographic ordering for survivors. Its population size is 120, offspring count 70, and mutation rate 0.067. ExLLM retains uniform parent sampling, a population of 50, two proposals per call, and experience injection probability 0.5. Its high/low evidence groups use the same feasibility-first order. Crossover and mutation probabilities are 0.8 and 0.2.
SEISMO receives the same feasibility-first priority through its conversational objective and measured constraint feedback, retaining its original proposal and decision process.
The hosted model uses temperature 0.7 and at most 2,048 completion tokens per request. Run indices label repeated runs rather than deterministic hosted-model seeds. Functional pilots are excluded from all reported means.

\subsection{Budget and Measurement Accounting}
SEISMO uses 51 generation transitions, the last an unscored summary, and at most 50 target-evaluation events. Repeated target evaluations count as events, while property predictions are cached by canonical molecule. After termination, the final Top-10 and historical best feasible scores are carried forward to the common 1,000-call horizon without further evaluations or molecules. Table~\ref{tab:online_budget} reports unique evaluation counts and evaluation events.
\begin{table}[htbp]\centering\small
\caption{Target-evaluation counts across 25 task/run cells per strategy. Ranges are per cell. FF denotes feasibility-first and FP fixed-penalty search. Repeated SEISMO evaluations add events without adding unique molecules. All AUCs use the same 1,000-call horizon.}
\label{tab:online_budget}
\begin{tabular}{lrrr}\toprule
Method & Unique evaluations & Evaluation events & Total unique \\\midrule
Graph-GA (FF) & 1,000 & 1,000 & 25,000 \\
Graph-GA (FP) & 1,000 & 1,000 & 25,000 \\
REINVENT4 (FP) & 1,000 & 1,000 & 25,000 \\
MOLLEO (FF) & 1,000 & 1,000 & 25,000 \\
ExLLM (FF) & 1,000 & 1,000 & 25,000 \\
SEISMO (FF) & 6--47 & 37--50 & 955 \\
\ours{} (E2) & 1,000 & 1,000 & 25,000 \\
\bottomrule\end{tabular}\end{table}

\FloatBarrier
\subsection{Task-Level Margins Against Every Online Strategy}
Figure~\ref{fig:online_task_margins}(a) shows positive target-AUC differences in all 30 method--task comparisons. The smallest is 0.0073 against SEISMO on DRD2. Figure~\ref{fig:online_task_margins}(b) shows 27 positive AA-AUC differences. The exceptions are MOLLEO and ExLLM on Sitagliptin and SEISMO on Albuterol. Figure~\ref{fig:online_task_margins}(c) shows 29 positive PHV differences, with a 0.0004 reversal against MOLLEO on Albuterol. Each cell subtracts two five-run task means. The counts summarize these task-level mean comparisons. Figures~\ref{fig:supp_online_auc_top_10}--\ref{fig:supp_online_pareto_hypervolume} provide all runs and score intervals.

\begin{figure}[H]
\centering\includegraphics[width=\linewidth]{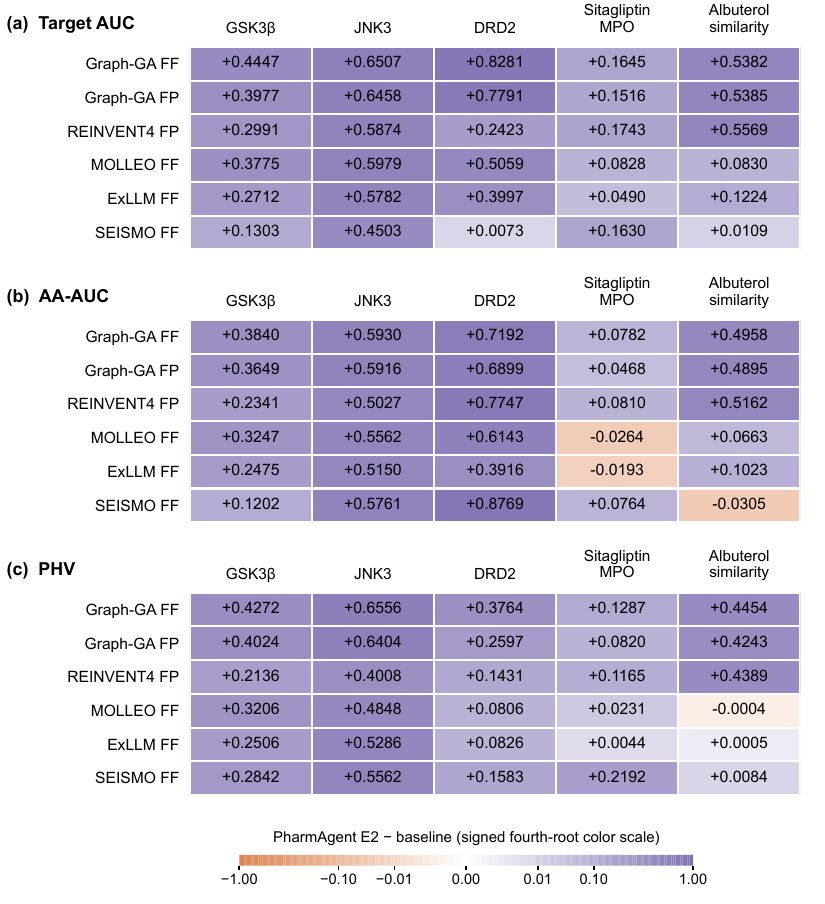}
\caption{PharmAgent E2 minus every online baseline on each task. Panels (a--c) show target AUC, AA-AUC, and PHV. Each cell is a difference between five-run means, calculated before rounding. Positive values favor PharmAgent. A shared signed color scale uses purple for positive differences and orange for negative differences. Cell fills and the color bar follow the same signed fourth-root scale, which makes small signed differences visible. Printed values are rounded to four decimals. All 90 comparisons are retained. Target AUC is reported per task. Colors encode mean differences, with uncertainty reported in Figures~\ref{fig:supp_online_auc_top_10}--\ref{fig:supp_online_pareto_hypervolume}. FF denotes feasibility-first and FP fixed-penalty search.}
\label{fig:online_task_margins}
\end{figure}

\clearpage
\subsection{Changes from Original to Online Search}
Figure~\ref{fig:online_adaptation_changes} complements the direct comparison by showing how each framework's outcomes change after online constraint handling is introduced. Each baseline adaptation is compared with that framework's original run configuration. PharmAgent E2 is compared with E1. The original trajectories receive the same post-hoc property evaluation, enabling comparison of AA-AUC and PHV on a common scale.

The three panels separate changes in target progress (Figure~\ref{fig:online_adaptation_changes}(a)), feasible-solution progress (Figure~\ref{fig:online_adaptation_changes}(b)), and two-objective coverage (Figure~\ref{fig:online_adaptation_changes}(c)). ExLLM's AA-AUC increases from 0.3121 to 0.4602, with a difference interval of $[0.1033,0.1929]$, while its target-AUC sum decreases by 0.5218. PharmAgent's AA-AUC increases from 0.5328 to 0.7076. Its independent-group difference interval is $[0.0195,0.3303]$, consistent with the run-index-matched comparison in the main paper. Its target-AUC-sum change of $-0.2272$ has interval $[-0.7880,0.3336]$.

The Graph-GA adaptations lower the three aggregate means under the tested configurations. REINVENT4 and SEISMO increase all three means, although their difference intervals cross zero. MOLLEO's changes in AA-AUC and PHV also have intervals spanning zero. These framework-specific responses show that access to constraint measurements and effective use of those measurements are distinct experimental questions. The direct E2 comparison in Table~\ref{tab:admet_pmo} evaluates the resulting online search methods.

\begin{figure}[htbp]
\centering\includegraphics[width=\linewidth]{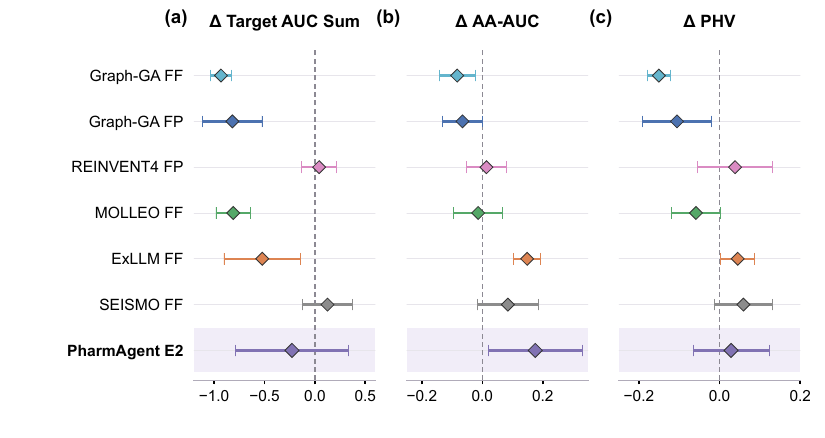}
\caption{Online minus original aggregate performance within each framework. Panels (a--c) show changes in target-AUC sum, AA-AUC, and PHV. Diamonds denote mean differences and bars unadjusted 95\% independent-group intervals over five run-level task aggregates per configuration. Positive changes favor online search. Graph-GA's original reference is shared by its two adaptations. PharmAgent compares E2 with E1, using the independent-group sensitivity analysis. Each contrast compares complete original and online configurations. FF denotes feasibility-first and FP fixed-penalty search. SEISMO retains its short-run protocol in both conditions.}
\label{fig:online_adaptation_changes}
\end{figure}

All five tasks and all five runs are retained for every contrast. Target-AUC changes use task sums, whereas AA-AUC and PHV use task means. The uncertainty calculation follows Appendix~\ref{app:statistics} and does not assume shared hosted-model random draws. The 21 displayed intervals are individual descriptive intervals without multiplicity adjustment. Both Graph-GA contrasts share an original reference, so they are not independent replications.

\FloatBarrier
\section{ADMET-PMO Protocol Details}
\label{app:scales}

\begin{table}[htbp]
\caption{ADMET constraints in the ADMET-PMO benchmark.  The same seven constraints are used for all five tasks evaluated in this paper.}
\label{tab:constraints}
\centering
\small
\begin{tabular}{llccl}
\toprule
Constraint & Property & Type & Threshold & Tool \\
\midrule
Lipinski MW & Molecular weight & $\leq$ & 500 Da & RDKit \\
Lipinski logP & Lipophilicity & $\leq$ & 5.0 & RDKit \\
QED & Drug-likeness & $\geq$ & 0.3 & RDKit \\
SA Score & Synth.\ accessibility & $\leq$ & 6.0 & RDKit \\
CYP2D6 inhibition & Metabolic safety & $\leq$ & 0.5 & ADMET-AI \\
hERG blocking & Cardiac safety & $\leq$ & 0.5 probability & ADMET-AI \\
Caco-2 & Permeability & $\geq$ & $-5.15$ & ADMET-AI \\
\bottomrule
\end{tabular}
\end{table}
QED and SA follow \citet{bickerton2012qed} and \citet{ertl2009sascore}. Learned endpoints use ADMET-AI~\citep{swanson2024admetai}.

\subsection{Consistent Evaluation Protocol}

Online ADMET-PMO measurements and post-hoc baseline annotations follow the same evaluation procedure.
Each molecule is represented by its RDKit-canonical SMILES~\citep{weininger1988smiles,rdkit} and evaluated individually.
Predictions are stored in a cache of up to 10,000 molecules, with the least recently used entries removed when capacity is reached.
This procedure gives consistent predictions across input batch sizes, molecule orderings, and prior cache contents. Cached results are copied before reuse.
The evaluation protocol fixes backend versions, endpoint mappings, constraints, normalization scales, and failure handling.
Online measurements and post-hoc baseline annotations are checked for consistency with this protocol before results are aggregated.

\subsection{Property-Specific Normalization Scales}

Constraint violations are normalized by property-specific scales so that the sigmoid transition zone is comparable across properties with vastly different natural ranges:

\begin{table}[htbp]
\caption{Property-specific normalization scales used in the ADMET score and Lagrangian penalty terms.}
\label{tab:scales}
\centering
\small
\begin{tabular}{lccl}
\toprule
Property & Scale $r_i$ & Steepness $s_i$ & Justification \\
\midrule
MW & 150\,Da & 5.0 & Std.\ dev.\ in drug-like space \\
logP & 2.0 & 5.0 & Std.\ dev.\ in drug-like space \\
QED & 0.2 & 5.0 & Range $[0,1]$, std $\approx 0.2$ \\
SA Score & 2.0 & 5.0 & Range $[1,10]$, std $\approx 2$ \\
CYP2D6 inh.\ & 0.3 & 5.0 & Probability scale $[0,1]$ \\
hERG blocking & 0.3 & 5.0 & Probability scale $[0,1]$ \\
Caco-2 perm.\ & 1.0 & 5.0 & Log permeability units \\
\bottomrule
\end{tabular}
\end{table}

\subsection{Continuous ADMET Score}
For full-constraint evaluation, the ADMET Analyst computes a continuous score by applying a sigmoid to each normalized distance from its threshold:
\begin{equation}
\text{ADMET}_{\text{score}}(x) = \frac{1}{\sum_i w_i} \sum_{i=1}^{K} w_i \cdot \sigma\!\!\left(\frac{s_i \cdot (\tilde t_i - \tilde g_i(x))}{r_i}\right),
\label{eq:admet_score}
\end{equation}
where $\sigma(\cdot)$ is the logistic sigmoid, $s_i$ is the steepness, $r_i$ is a property-specific normalization scale (Table~\ref{tab:scales}), and $w_i$ are configurable per-constraint weights.
The direction-normalized margin treats upper- and lower-bound requirements consistently.
This continuous score supports experience annotation and Pareto evaluation. Hard feasibility requires all seven inequalities, and candidate acceptance uses~\eqref{eq:lagrangian}.
The proposal model does not use gradients of the sigmoid score.
Prompt exposure follows the active constraints. Phase~1 and Standard PMO use target-only Generator and Reviewer templates.

\subsection{Formal Metric Definitions}

Let $\{(x_j, s_j)\}_{j=1}^N$ denote scored molecules, sorted by $s_j$ descending.
Let $\text{pass}(x) = \mathbb{1}[\forall i\colon \tilde g_i(x)\leq\tilde t_i]$.

\textbf{Pass rate among the top $k$ molecules (PR@$k$).}
$
\text{PR@}k = \frac{1}{k}\sum_{j=1}^{k} \text{pass}(x_j).
$

\textbf{AA-AUC.}
Let $s^*_{\text{pass}}(n)$ be the best score among passing molecules after $n$ oracle calls.
$
\text{AA-AUC} = \frac{1}{B}\sum_{n=1}^{B} s^*_{\text{pass}}(n).
$

\textbf{PHV.}
Using reference point $\mathbf{r}=(0,0)$:
$
\text{PHV} = \text{vol}\big(\bigcup_{j\in\mathcal{P}} [\mathbf{r},(s_j,a_j)]\big),
$
where $\mathcal{P}$ is the Pareto front of $(s_j, a_j)$ pairs and $a_j$ is the ADMET score of molecule~$j$.

%=============================================================================
\section{Prompt Templates}
\label{app:prompts}

\subsection{Generator Prompts}
The prompts request structured JavaScript Object Notation (JSON) responses.

\begin{quote}\small\ttfamily\raggedright
You are PharmAgent-Generator, an expert medicinal chemist AI.  You optimize molecules by proposing specific structural modifications with detailed pharmaceutical reasoning.

Rules: 1.\@ Output ONLY valid JSON.  2.\@ Every proposed SMILES must be chemically valid.  3.\@ Each modification must have explicit medicinal chemistry rationale.  4.\@ Consider only the explicitly active declared constraints.  5.\@ Output a JSON object with key ``candidates'' containing an array.
\end{quote}

\paragraph{Target-only and constrained branches.}
The quoted system prompt is used in constrained phases.
Standard PMO and curriculum Phase~1 instead use an explicit target-only system prompt that instructs the Generator to optimize only the stated target and omits all ADMET fields.
Both user-prompt branches receive the current SMILES and target score, ChEMBL matched-pair context from MedChem-RAG, previous Reviewer feedback, structured round-memory records, and the top-5 Chemical-CER experiences.
Only the constrained branch adds the current active-constraint ADMET summary and two explicitly separated blocks containing current measurements (value, threshold, direction, signed margin, and normalized violation) and historical multiplier pressure~$\lambda_i$.
The multiplier block records optimization history, while current measurements determine pass/fail status.
Pan-assay interference compounds (PAINS), Brenk, and Veber alerts appear in neither Generator branch and are never optimization guidance. If evaluated, they are post-hoc analyses outside the formal online state.
Every Generator request asks for at most five candidates.

\paragraph{Additional proposal instructions.}
When a restart is scheduled, the base prompt receives one proposal-strategy suffix (scaffold-hop, ring/linker re-architecture, or bioisosteric replacement). The suffix explicitly preserves the incumbent, supplied constraints, RAG evidence, and run history.
If and only if the primary response yields no valid molecules absent from the evaluated set, a second and final Generator request appends standardized rejection or duplication reasons and candidate indices and up to 20 previously evaluated canonical SMILES to avoid ordered by Morgan-Tanimoto similarity to the incumbent.
The second request preserves the base objective, active constraints, Verifier, and five-candidate cap.
Cross-round and cross-agent reuse is restricted to structured fields comprising canonical SMILES, finite scores and deltas, active-constraint facts, validated modification types, and normalized Reviewer priorities. Raw model prose is not forwarded as replay context. RAG exposes canonical structure pairs and finite target-score deltas. CER exposes canonical pairs and finite numerical deltas.

\paragraph{Phase-3 context-only repair anchors.}
After five consecutive non-improving rounds within the phase in Phase~3, each Generator request already permitted in that activated round receives the current archive of at most two infeasible Pareto anchors. This applies to the primary request and, when no valid novel candidate is produced, the single repair request.
The same archive is supplied to the Reviewer when the round produces scored candidates.
The highest-target and lowest-total-normalized-violation representatives are selected among observations that strictly outperform the best feasible target score.
Each anchor is accompanied by its SMILES and per-constraint value, threshold, direction, signed margin, and normalized violation.
The Generator is instructed to edit only the incumbent. The Verifier does not enforce structural similarity between the incumbent and a proposal.
The Reviewer receives the accepted state (the improving candidate or retained incumbent) as the incumbent for the next round.
Anchors provide measured examples for proposal revision. Acceptance, multiplier updates, and Chemical-CER recording follow their existing rules. CER stores the transition to the best evaluated candidate of the round rather than to the accepted iterate.
Thresholds, metrics, the Verifier, transition rules, candidate and request caps, and the oracle budget remain fixed. Through their influence on proposals, anchors can affect accepted states, multiplier updates, replay records, and the sequence of model requests, network attempts, and oracle evaluations.
They add bounded input tokens to those existing prompts.

\subsection{Reviewer Prompts}

\begin{quote}\small\ttfamily\raggedright
You are PharmAgent-Reviewer, a senior medicinal chemistry advisor.  You analyze optimization results and provide strategic guidance.

Rules: 1.\@ Output ONLY valid JSON.  2.\@ Be specific in suggestions: name exact structural modifications.  3.\@ Consider both activity and ADMET properties.  4.\@ Recommend termination if optimization has stagnated or goals are met.
\end{quote}

The quoted prompt is used only in constrained phases.
The target-only branch omits ADMET fields and directs the Reviewer not to use secondary properties.
For constrained rounds, every displayed candidate is paired with that candidate's own per-constraint value, threshold, direction, signed margin, and normalized violation. The incumbent's values are never substituted for a candidate's values, and historical $\lambda_i$ pressure remains a separate block.
When anchors are active, the Reviewer receives the same selected anchors as the Generator and uses the accepted state as the incumbent for the next round.
In formal runs with the 1,000-call budget, early termination is disabled, so the Reviewer is called at most once after a round scores at least one novel candidate and its termination recommendation cannot truncate the oracle budget.

%=============================================================================
\section{Chemical-CER Algorithm Details}
\label{app:cer}

\paragraph{Scores recorded with each transition.}
For a record collected in round $\tau$, let $J_\tau$ denote that round's scalar evaluation rule, with all its parameters fixed.
The record stores
\[
\Delta_{\mathrm{target}}=f(x_{\mathrm{tgt}})-f(x_{\mathrm{src}}),\qquad
\Delta_{\mathrm{combined}}=J_\tau(x_{\mathrm{tgt}})-J_\tau(x_{\mathrm{src}}).
\]
For the constrained full system, $J_\tau=\mathcal L_{\mathcal C_\tau}(\cdot,\boldsymbol\lambda^{(\tau)},\rho^{(\tau)})$. The target-only objective gives $J_\tau=f$.
The stored ADMET change describes the annotation used at collection time.
An entry remains an evaluated transition even when the candidate was rejected. Eligibility for later retrieval is a separate operation.

\paragraph{Eligibility before similarity.}
The retrieval interface distinguishes positive target-score change from positive collection-objective change.
Let $h_t(e)$ be the retrieval eligibility rule, determined by the improvement criterion and the permitted origin phases. Then $\mathcal D_t^{\mathrm{eligible}}=\{e\in\mathcal D:h_t(e)=1\}$.
Replay uses target-only information in Phases 1--2 and combined-objective information for all seven constraints in Phase~3.
Phase~1 exposes records originating before round~20 and Phase~2 before round~60. Phase~3 has no origin-round cutoff.
Retrieval uses completed records from earlier tasks and rounds of the current task. It excludes attempts from the ongoing round and incomplete records from failed rounds.
Historical task text and modification prose are not replayed.
The predicate is applied to recorded outcomes before source-structure distances are ranked.
Historical scalar changes retain their collection-time objective and multiplier values during retrieval.

\begin{algorithm}[htbp]
\caption{Chemical-CER: structure-aware experience retrieval}
\begin{algorithmic}[1]
\REQUIRE Experience database $\mathcal{D}=\{e_1,\ldots,e_N\}$, query molecule~$x$, integer~$k$, eligibility predicate $h_t$
\ENSURE Retrieved experiences $\mathcal{E}_k$
\STATE $\mathbf{fp}_x \leftarrow \textsc{MorganFP}(x,\,\text{radius}=2,\,\text{bits}=2048)$
\STATE $d_j \leftarrow \|\mathbf{fp}_x - \mathbf{fp}_{\text{src}}(e_j)\|_2$ for all $j=1,\ldots,N$ \hfill\COMMENT{Vectorized}
\STATE $\mathcal{S} \leftarrow \{e_j\in\mathcal{D} : h_t(e_j)=1\}$ \hfill\COMMENT{Apply configured eligibility}
\STATE $\mathcal{E}_k \leftarrow$ first $\min(k,|\mathcal S|)$ elements of $\mathcal{S}$ after stable sorting by ascending $d_j$
\STATE Format canonical structure pairs and finite numeric deltas for proposal conditioning
\RETURN $\mathcal{E}_k$
\end{algorithmic}
\end{algorithm}

The database stores fingerprints as a flat NumPy array and computes all distances in a single vectorized operation, avoiding the need for an external index (such as FAISS) at the typical scale of $<$1{,}000 entries.
The database persists across tasks through an optional serialization mechanism.
When the number of stored experiences exceeds the configured maximum (1{,}000 by default), the oldest entries are evicted in first-in, first-out (FIFO) order.

%=============================================================================
\section{Computational Resources and Reproducibility}
\label{app:compute}
\paragraph{Model and scientific evaluation.}
Generator and Reviewer access the provider-hosted MiniMax-M3 model through MiniMax's OpenAI-compatible application programming interface (API) in China~\citep{minimax2026m3api}.
Provider-side model versioning and sampling are addressed in Appendix~\ref{app:limitations}.
Run indices 0--4 identify repetitions and seed local random operations where applicable.
ADMET measurements use ADMET-AI~v1.4.0 with RDKit~\citep{swanson2024admetai,rdkit}.
Each uncached canonical molecule is evaluated individually and reused through the property cache.

\paragraph{Requests, rounds, and recovery.}
The formal target budget is 1,000 unique calls per task/run, including the 26-member seed pool.
Each primary Generator request proposes at most five candidates. A response with no valid novel molecules can trigger one final repair request with the same cap.
A scored round invokes at most one Reviewer.
Both E1 and E2 use a cap of 60,100 Hypertext Transfer Protocol (HTTP) attempts per task/run.
Observed round counts range from 458 to 3,592 for E1 and 538 to 6,616 for E2.
A round may contain no novel molecule, so rounds, HTTP attempts, and unique oracle evaluations are different quantities.
All E1/E2 cells reach 1,000 unique target calls. No performance-based early stopping is used.
The reported experiments use the request and round limits specified here.

Bounded novelty repair and proposal-context restarts preserve the incumbent and target budget.
After five consecutive non-improving rounds within the phase in Phase~3, up to two previously scored near-feasible anchors may enter existing prompts.
They supply context without directly replacing the incumbent or changing the acceptance rule.
The archive is evaluated separately from Chemical-CER, whose record remains the incoming-incumbent-to-round-best transition.
Stage changes occur only at rounds 20 and 60, and the penalty reset retains multipliers.
Appendix~\ref{app:prompts} describes the structured fields passed through recovery and feedback.

\paragraph{Recorded cost.}
Table~\ref{tab:cost} reports original provider usage across five tasks and five runs per arm.
Token totals measure recorded model usage. E2 uses 2.408 times E1's recorded total tokens.
Post-hoc E1 annotations reuse the 25,000 molecules from its existing generation runs.
\begin{table}[htbp]\centering\small
\caption{Recorded E1/E2 compute over 25 runs per arm. Oracle calls count unique target evaluations. HTTP attempts include retries. Round counts include rounds without new molecules.}
\label{tab:cost}
\begin{tabular}{lrr}\toprule Quantity & E1 & E2 \\\midrule
Unique target evaluations & 25,000 & 25,000 \\
Recorded optimization rounds & 38,013 & 49,854 \\
Successful HTTP calls & 83,653 & 108,055 \\
HTTP attempts & 83,751 & 108,168 \\
Input tokens & 119,050,374 & 301,475,520 \\
Output tokens & 55,388,732 & 118,552,476 \\
Total tokens & 174,439,106 & 420,027,996 \\
\bottomrule\end{tabular}\end{table}

The non-LLM online controls use 75,000 formal and 30,000 development target evaluations, with no LLM calls. MOLLEO FF and ExLLM FF contribute 25,000 unique formal target evaluations each. SEISMO FF evaluation counts appear in Table~\ref{tab:online_budget}. Oracle evaluation totals and LLM token/time costs are accounted for separately.
Graph-GA and REINVENT4 use the same server-side target/ADMET measurement environment. REINVENT4 optimization can execute locally through that measurement service.
Both optimizers therefore obtain measurements from the same evaluation service.
Missing baseline time/token measurements are left unreported. Appendix~\ref{app:limitations} discusses computational-cost comparability.

\paragraph{Finite controller observations.}
All 25 E2 runs terminate with accepted molecules passing the seven final constraints.
During Phase~3, the mean run-level fraction of feasible accepted rounds is 95.62\%. This includes repeated incumbents during rounds without new molecules.
It differs from the 29.13\% feasible fraction among evaluated molecules.
The largest observed multiplier is 1,815.5313.
Both E1 and E2 find an initially feasible seed at call~1. AA-AUC and best-feasible-score curves therefore track improvement beyond initial feasibility. These curves report finite-budget performance.
Trajectory analyses include every task and run, distinguishing violations of active constraints from violations under the full set of seven constraints.

\section{Hyperparameter Configuration}
\label{app:sensitivity}

Table~\ref{tab:hyperparams} records the default controller, retrieval, scheduling, and recovery settings.

\begin{table}[htbp]
\caption{Hyperparameters for Lagrangian, Chemical-CER, and Curriculum modules.}
\label{tab:hyperparams}
\centering
\small
\begin{tabular}{llc}
\toprule
State / operation & Parameter & Default \\
\midrule
\multirow{3}{*}{Constraint state}
  & Initial penalty $\rho_0$ & 1.0 \\
  & Growth factor $\gamma$ & 1.5 \\
  & Maximum penalty $\rho_{\max}$ & 100 \\
\midrule
\multirow{3}{*}{Experience state}
  & Retrieval count $k$ & 5 \\
  & Archive capacity & 1000 \\
  & Fingerprint radius / bits & 2 / 2048 \\
\midrule
\multirow{2}{*}{Schedule state}
  & Phase~1 end $R_1$ & 20 \\
  & Phase~2 end $R_2$ & 60 \\
\midrule
Context recovery & Stagnation trigger within each phase & 5 \\
\bottomrule
\end{tabular}
\end{table}

The initial $\rho_0$ sets the quadratic penalty when constraints activate or the active set expands.
The growth factor $\gamma$ increases this penalty after a violated accepted state, subject to the cap $\rho_{\max}$. Feasible accepted states leave it unchanged.
The retrieval count limits the number of examples exposed to the Generator, and archive capacity limits retained experience.
The curriculum changes only at its fixed round boundaries. The stagnation trigger controls proposal context and does not activate constraints.

%=============================================================================
\section{Broader Impact}
\label{app:impact}

\ours{} is a computational tool for molecular optimization and does not directly produce physical molecules.
Its outputs are SMILES strings that must be validated by medicinal chemists and tested in wet-lab assays before entering any drug development pipeline.
By incorporating ADMET constraints, the framework is designed to steer optimization toward molecules with better safety profiles, which could reduce the number of toxic candidates that proceed to animal testing.

A potential risk of any molecular generation system is the production of molecules with harmful properties (toxins, controlled substances, environmental pollutants).
The implemented screen penalizes predicted hERG inhibition. It does not include genotoxicity endpoints or establish safety against these broader risks.
Nonetheless, the system is not a substitute for regulatory toxicology assessment, and users should apply additional safety checks before any experimental synthesis.

The ADMET-PMO benchmark itself poses minimal risk, as it consists of computational oracle functions applied to virtual molecules.
Releasing the benchmark and evaluation code promotes transparency and reproducibility in the field.

%=============================================================================
\section{Comparison of Evaluated Baselines}
\label{app:concurrent}

Table~\ref{tab:concurrent} compares the original five baseline implementations along six axes. The online adaptations in Table~\ref{tab:admet_pmo} enable online measurements and constraint-aware selection or feedback. Their protocol is specified in Appendix~\ref{app:online}.  Its original-method roster matches Tables~\ref{tab:standard_pmo} and~\ref{tab:posthoc}. The comparison describes the configurations evaluated in this study.

\begin{table}[H]
\caption{Features enabled in the evaluated five-baseline protocol.  \ding{51} indicates enabled features and \ding{55} indicates features not enabled. The symbol \ding{109} denotes within-task memo or trajectory reuse without fingerprint-based experience retrieval.  Online ADMET columns concern the optimization loop. All five baselines separately receive the same post-hoc ADMET annotation.  $\dagger$ denotes our independent MiniMax-M3 reimplementations.}
\label{tab:concurrent}
\centering
\small
\resizebox{\linewidth}{!}{%
\begin{tabular}{lcccccc}
\toprule
System & Multi-agent & Online ADMET & Online ADMET & Memory & Cross-task & Five-task \\
 & & evaluation & constraints & retrieval & transfer & PMO \\
\midrule
Graph-GA & \ding{55} & \ding{55} & \ding{55} & \ding{55} & \ding{55} & \ding{51} \\
REINVENT4 & \ding{55} & \ding{55} & \ding{55} & \ding{55} & \ding{55} & \ding{51} \\
MOLLEO-MiniMax-M3$^{\dagger}$ & \ding{55} & \ding{55} & \ding{55} & \ding{55} & \ding{55} & \ding{51} \\
ExLLM-MiniMax-M3$^{\dagger}$ & \ding{55} & \ding{55} & \ding{55} & \ding{109} & \ding{55} & \ding{51} \\
SEISMO-MiniMax-M3 & \ding{55} & \ding{55} & \ding{55} & \ding{109} & \ding{55} & \ding{51} \\
\midrule
\ours{} & \ding{51} & \ding{51} & \ding{51} & \ding{51} & \ding{51} & \ding{51} \\
\bottomrule
\end{tabular}%
}
\end{table}

Graph-GA and REINVENT4 optimize the target-only objective in our baseline runs. REINVENT4's broader configurable scoring capabilities are not enabled as online ADMET constraints in this experiment.
MOLLEO-MiniMax-M3$^{\dagger}$ and ExLLM-MiniMax-M3$^{\dagger}$ are independent reimplementations.
SEISMO-MiniMax-M3 preserves the released prompts and original short-run PMO protocol with a MiniMax-M3 backend~\citep[Section~5.1 and Appendix~A]{kim2026seismo}.
The protocol permits at most 50 oracle evaluations. Our runs produce 27--48 unique target evaluations after molecular canonicalization.
After optimization ends, the final Top-10 score is held constant through the common 1,000-call AUC horizon, following the original paper's reporting convention.
This carry-forward is used only for metric integration and creates no additional oracle evaluations.
Post-hoc annotation evaluates the original generated trajectories through the common property protocol in Table~\ref{tab:posthoc}. PharmAgent uses property measurements during search to guide subsequent decisions.
Experience retrieval and cross-task transfer refer specifically to retrieval and reuse of stored optimization attempts. The main results quantify the performance advantage.

%=============================================================================
\begin{table}[H]
\caption{State and memory mechanisms in the five baseline frameworks and Chemical-CER.  Framework names omit the LLM backend. Implementation details are specified in the experimental setup.  Within-task state includes populations, learned policy parameters, and explicit history.  Structure-aware denotes fingerprint-based retrieval of optimization experiences. Cross-task denotes reuse of an optimization experience archive in the evaluated configurations.}
\label{tab:memory_compare}
\centering
\small
\begin{tabular}{llcc}
\toprule
Method & Within-task state & Structure-aware & Cross-task \\
\midrule
Graph-GA & Evolutionary population & \ding{55} & \ding{55} \\
REINVENT4 & Learned policy parameters & \ding{55} & \ding{55} \\
MOLLEO & Population / parent scores & \ding{55} & \ding{55} \\
ExLLM & Experience memo & \ding{55} & \ding{55} \\
SEISMO & Interaction trajectory & \ding{55} & \ding{55} \\
Chemical-CER (ours) & Retrieved transitions & \ding{51} & \ding{51} \\
\bottomrule
\end{tabular}
\end{table}

\paragraph{Complete aggregate results.}
Table~\ref{tab:posthoc} reports the original-protocol property comparison. Table~\ref{tab:final_scores} gives final target scores and uncertainty for both original and online configurations.
\begin{table}[H]
\caption{Original baselines with post-hoc property annotation, compared with online PharmAgent E2. Results average five tasks and five runs. The 5-Task Sum adds task-level AUC means. All metrics are higher-is-better. Baselines receive post-hoc ADMET annotation. \ours{} uses online constraints. Backend and evaluation protocols follow Section~\ref{sec:benchmark}. Best/second-best means are bold/underlined.}
\label{tab:posthoc}
\centering
\small
\begin{tabular}{lccc|cc}
\toprule
 & \multicolumn{3}{c|}{Target Score} & \multicolumn{2}{c}{Property-Aware Quality} \\
Method & Top-1 & AUC Top-10 & 5-Task Sum & AA-AUC & PHV \\
\midrule
Graph-GA & 0.6029 & 0.3998 & 1.9988 & 0.3367 & 0.5686 \\
REINVENT4 & 0.5864 & 0.3582 & 1.7911 & 0.2723 & 0.5238 \\
MOLLEO$^{\dagger}$ & \underline{0.7797} & \underline{0.5713} & \underline{2.8565} & \underline{0.4145} & \underline{0.7016} \\
ExLLM$^{\dagger}$ & 0.7364 & 0.5590 & 2.7950 & 0.3121 & 0.6065 \\
SEISMO & 0.6053 & 0.5613 & 2.8067 & 0.2994 & 0.5204 \\
\ours{} & \textbf{0.8514} & \textbf{0.7387} & \textbf{3.6936} & \textbf{0.7076} & \textbf{0.8248} \\
\bottomrule
\end{tabular}
\end{table}

\begin{table}[H]\centering\small
\caption{Final target-score summaries for all configurations across five fixed tasks and five runs. Means $\pm$ unadjusted 95\% interval half-widths. The lower block contains online adaptations. FF denotes feasibility-first and FP fixed-penalty search. Best/second-best means are bold/underlined across all rows. *SEISMO follows its original short-run protocol.}
\label{tab:final_scores}
\begin{tabular}{lcc}\toprule Method & Top-1 & Top-10 \\\midrule
PharmAgent E1 & $\mathbf{0.8641}\pm0.0645$ & $\mathbf{0.8606}\pm0.0634$ \\
PharmAgent E2 & $\underline{0.8514}\pm0.0561$ & $\underline{0.8305}\pm0.0751$ \\
Graph-GA & $0.6029\pm0.0194$ & $0.5498\pm0.0157$ \\
REINVENT4 & $0.5864\pm0.0469$ & $0.4613\pm0.0264$ \\
MOLLEO$^\dagger$ & $0.7797\pm0.0401$ & $0.7484\pm0.0406$ \\
ExLLM$^\dagger$ & $0.7364\pm0.0565$ & $0.7184\pm0.0561$ \\
SEISMO* & $0.6053\pm0.0512$ & $0.5733\pm0.0432$ \\
\midrule
Graph-GA (FF) & $0.4375\pm0.0141$ & $0.3219\pm0.0257$ \\
Graph-GA (FP) & $0.4837\pm0.0852$ & $0.3575\pm0.0981$ \\
REINVENT4 (FP) & $0.6221\pm0.0901$ & $0.4663\pm0.0197$ \\
MOLLEO$^\dagger$ (FF) & $0.6788\pm0.0517$ & $0.5864\pm0.0290$ \\
ExLLM$^\dagger$ (FF) & $0.6647\pm0.0321$ & $0.6352\pm0.0320$ \\
SEISMO* (FF) & $0.6405\pm0.0338$ & $0.5978\pm0.0261$ \\
\bottomrule\end{tabular}\end{table}

\FloatBarrier

\section{Seed Molecule Pool}
\label{app:seeds}

For seed selection (Section~\ref{sec:experiments}), we use a pool of 26 diverse molecules covering simple aromatics, heterocycles, approved drugs, bicyclic scaffolds, aliphatic chains, and common drug motifs:

{\small
\begin{verbatim}
c1ccccc1  c1ccncc1  c1ccoc1  c1ccsc1  c1cc[nH]c1
C1CCNCC1  C1CCOCC1  C1CCSCC1
CC(=O)Oc1ccccc1C(=O)O   CC(C)Cc1ccc(cc1)C(C)C(=O)O
CC(=O)Nc1ccc(O)cc1      CC(C)NCC(O)c1ccc(O)c(O)c1
c1ccc2ncccc2c1  c1ccc2c(c1)[nH]c1ccccc12  c1ccc2c(c1)cc1ccccc1n2
CCCCCCCC(=O)O  CC(C)(C)C(=O)O  C1CCC(CC1)O  CC(C)(C)O
O=C(O)c1ccccc1O  c1ccc(cc1)O  NC(=O)c1ccccc1
c1ccc(cc1)S(=O)(=O)N  CC(=O)c1ccccc1  C1COCCN1  C1CNCCN1
\end{verbatim}
}

At the start of each run, all 26 molecules are scored by the target oracle (at most 26 calls, $<$3\% of the 1,000-call budget). The highest-scoring molecule initializes optimization, reducing dependence on a single arbitrary seed at low oracle cost.

\clearpage
\section{Limitations and Remaining Validation}
\label{app:limitations}
\ours{} leads all five E2 aggregate metrics, with higher means in 30/30 target-AUC, 27/30 AA-AUC, and 29/30 PHV task-level comparisons (Figure~\ref{fig:online_task_margins}). The few online exceptions concern Sitagliptin and Albuterol, including a PHV reversal of 0.0004. In E1, SEISMO leads DRD2 and Albuterol. E2's target-AUC sum is 3.6936 versus E1's 3.9208, and their target-AUC and PHV difference intervals cross zero.

Evaluation covers five tasks and predicted properties. Experimental safety validation and cross-domain transfer remain open. All methods receive the same oracle allowance, with actual evaluations and computation varying by protocol. SEISMO retains short-run optimization and terminal-score carry-forward for AUC integration. Online adaptations were designed after the original PharmAgent runs and fixed before formal outcomes. Hosted-model versions and sampling are provider-controlled rather than fixed by local run labels. We plan to extend evaluation to open-weight language models as computational resources permit.

Proposition~\ref{prop:conditional_feasibility} assumes uniformly bounded multipliers, which is not established for the search and cannot be verified from finite runs. The analysis does not guarantee finite-budget feasibility, optimality, acceleration, or stationarity.

Intervals are unadjusted. Molecular cases describe fixed endpoints rather than isolated scheduling effects, and original-to-online contrasts compare complete configurations.

\end{document}